\renewcommand*{\@jmlrproceedings}{}
\title[Uniform Convergence Beyond Glivenko-Cantelli]{Uniform Convergence Beyond Glivenko-Cantelli}
\newcommand{\QQ}{\mathcal{Q}}
\newcommand{\prob}{\mathbb{P}}
\newcommand{\PP}{\mathcal{P}}
\newcommand{\NN}{\mathbb{N}}
\newcommand{\expect}{\mathbb{E}}
\newcommand{\A}{\mathcal{A}}
\newcommand{\parof}[1]{\left( #1 \right)}
\newcommand{\sqof}[1]{\left[ #1 \right]}
\newcommand{\curlof}[1]{\left\{ #1 \right\}}
\newcommand{\modu}[1]{\left|#1\right|}
\newcommand{\cov}[3]{\left#1 #2 \right #3}
\newcommand{\eps}{\varepsilon}
\newcommand{\1}{\mathbbm{1}}
\newcommand{\norm}[1]{\left\lVert#1\right\rVert}
\begin{document}

\maketitle

\begin{abstract}
  We characterize conditions under which collections of distributions on $\{0,1\}^\NN$ admit uniform estimation of their mean. Prior work from \cite{vc1971} has focused on uniform convergence using the empirical mean estimator, leading to the principle known as $P-$ Glivenko-Cantelli. We extend this framework by moving beyond the empirical mean estimator and introducing Uniform Mean Estimability, also called UME-learnability, which captures when a collection permits uniform mean estimation by any arbitrary estimator. We work on the space created by the mean vectors of the collection of distributions. For each distribution, the mean vector records the expected value in each coordinate. We show that separability of the mean vectors is a sufficient condition for UME-learnability. However, we show that separability of the mean vectors is not necessary for UME-learnability by constructing a collection of distributions whose mean vectors are non-separable yet UME-learnable using techniques fundamentally different from those used in our separability-based analysis. Finally, we establish that countable unions of UME-learnable collections are also UME-learnable, solving the conjecture posed in  \cite{cohen2025empiricalmeanminimaxoptimal}.
\end{abstract}

\begin{keywords}%
  Glivenko-Cantelli, Uniform Convergence, Uniform Mean Estimation
\end{keywords}

\section{Introduction}
The seminal work of \cite{vc1971} establishes that for any binary function class $\mathcal{F}$, finite VC dimension guarantees uniform convergence independent of the distribution. However, in settings where $\mathcal{F}$ admits infinite VC dimension, uniform convergence can still hold for some distributions provided specific properties are satisfied; in such cases, we say $\mathcal{F}$ satisfies the $P-$Glivenko-Cantelli property, which is described as 
\begin{equation}
    \label{eqn:pgc}
    \expect_{S\sim P^n} \sqof{\sup_{f \in \mathcal{F}} \modu{\mathbb{P}_nf-Pf} } \xrightarrow{n \rightarrow \infty} 0, 
\end{equation}
where $S$ is a set of $n$ i.i.d. data points sampled from $P$, $\mathbb{P}_nf$ is the empirical mean of $f$ computed using $S$, and $Pf$ is the true mean of $f$. The result of \cite{vc1971} characterizes the distributions that satisfy the $P-$ Glivenko-Cantelli property for any binary function class $\mathcal{F}$. \\
The work of \cite{cohen2025empiricalmeanminimaxoptimal} raised the direction of going beyond the empirical mean estimator and posited a conjecture, which we presently resolve and prove a stronger result in this paper. In this work, we consider collections of distributions defined over a countable function class $\mathcal{F}$ that admit uniform mean estimation even if it fails to satisfy the $P-$ Glivenko-Cantelli property. Our goal is to identify conditions under which the following holds:
\begin{equation}
    \label{eqn:ume}
    \expect_{S \sim P^n} \sqof{\sup_{f \in \mathcal{F}} \modu{\mathscr{P}_nf-Pf} } \xrightarrow{n \rightarrow \infty} 0, 
\end{equation}
where $\mathscr{P}_nf$ is an arbitrary estimator that uses $S$ to estimate $Pf$ the true mean of $f$.\\
For countable, binary-valued concept classes, $P-$Glivenko–Cantelli can also be equivalently expressed as a distribution on $\{0,1\}^{\mathbb N}$ where the $j^{\mathrm{th}}$ coordinate of the distribution is equivalent to the output of the $j^{\mathrm{th}}$ function. The works of \cite{cohen2023localglivenkocantelli,cohen2025empiricalmeanminimaxoptimal} can be understood through this lens, and it is the one we adopt. This alternative formulation allows us to work directly with distributions over $\{ 0, 1\}^\NN$, thereby abstracting away the explicit choice of a concept class.
Formally speaking, equation \eqref{eqn:pgc} can be rewritten as equation \eqref{eqn:lgc} below in the following way:
\begin{equation}
    \label{eqn:lgc}
    \expect_{S \sim \mu^n} \sqof{\sup_{j \in \NN} \modu{\hat q_j - q_j}} \;=\; \expect_{S \sim \mu^n} \norm{\hat q - q}_\infty
\end{equation}
where $\mu$ is a distribution on $\{0, 1 \}^{\NN}$, $S$ is a sample of size $n$, and $\hat q_j$ and $q_j$ are the empirical mean and true mean respectively for the $j^{\mathrm{th}}$ function in the countable function class.\\
In this paper, we characterize properties of a collection of distributions $\QQ$ on $\{0,1\}^{\NN}$ that ensure \eqref{eqn:lgc} converges to $0$ as $n\to\infty$, and we address the broader question of replacing the empirical estimator $\hat q$ with an arbitrary estimator $\tilde q$.
More technically, we define Uniform Mean Estimation (UME) learnability\footnote{The term ``uniform'' here refers to uniformity over indices, not uniformity over distributions (though see Appendix~\ref{app:uniform-learnability}).} in the following way: there exists an algorithm $\mathcal{A}$ such that for any ground truth distribution $\mu^* \in \mathcal{Q}$, given $n$ data points, it produces an estimate $\tilde{q}$ of the true mean vector $q \in [0, 1]^{\NN}$ satisfying
\begin{equation}
    \label{eqn:learningsetting}
    \expect_{S \sim \mu^n} \norm{\A(S)-q}_\infty = \expect_{S \sim \mu^n} \norm{\tilde q-q}_\infty \xrightarrow{n \rightarrow \infty} 0.
\end{equation}
The motivation for this framework can be seen in the limitations of the empirical mean as observed in \cite{cohen2023localglivenkocantelli}. Consider the collection of distributions $\QQ = \{\mu\}$ where $\mu$ is a product measure and Mean$(\mu) = \parof{\frac{1}{2},\frac{1}2,\ldots}$. As the coordinates of $\mu$ are independent, we obtain $X_j \sim \text{Bernoulli}\parof{\frac12}$. An obvious algorithm is to return the mean of the only distribution in the collection, which is $\parof{\frac{1}{2},\frac{1}{2},\ldots}$. But even for such a trivial collection, the empirical mean estimator fails. The probability of obtaining all $0$s or all $1$s for $n$ data points at a particular coordinate is positive. There are infinitely many coordinates at which this could occur; hence, it will almost surely occur. Hence, the empirical mean estimator cannot be used to estimate this collection of distributions even though it consists of only one distribution.
\subsection*{Our Contributions}
\begin{itemize}
  \item \textbf{Separability implies Learnability:} We study when collections of distributions $\QQ$ are UME-learnable and prove that if the collection of mean vectors corresponding to $\QQ$ is separable, then $\QQ$ is UME-learnable (Theorem \ref{thm:separable}).
  \item \textbf{Closed under Unions:} We prove that any countable union of UME-learnable collections of distributions is also UME-learnable (Theorem~\ref{thm:countableunions}). In particular, this resolves the conjecture of \cite{cohen2025empiricalmeanminimaxoptimal} for the union of two families and extends it to countably many families.
  \item \textbf{Beyond Separability:} A natural question we tackle is whether separability is necessary for UME-learnability. We illustrate that it is not a necessary condition by constructing a collection $\QQ$ whose space of mean vectors is non-separable, yet it is UME-learnable (Proposition~\ref{Prop:Tree}). Moreover, our construction utilizes techniques fundamentally different from those used in Theorem~\ref{thm:separable}, which may be of independent interest.
\end{itemize}
\section{Related Works}
\textbf{Classical Empirical Process Theory}
Our work stems from classical empirical process theory, which aims to characterize the conditions under which the empirical estimator converges to the true mean uniformly over a class of functions. For binary functions, \cite{vc1971} provides necessary and sufficient conditions that are independent of the underlying distribution guaranteed by the finiteness of a combinatorial quantity known as the VC dimension. They also characterize distributions for which the empirical mean estimator is a uniform estimator for the true mean using VC entropy. The subsequent work \cite{vapnik1981uniform} obtains that sub-exponential growth of the empirical covering numbers is also necessary and sufficient for uniform convergence. Modern expositions and refinements of these results can be found in \cite{Vapnik2006,vanderVaart2023}.\\
\textbf{Product Measures on $\{ 0, 1\}^{\mathbb{N}}$}
\cite{cohen2023localglivenkocantelli} study product measures on $\{0,1\}^\NN$ that are uniformly estimable by the empirical mean estimator. They identify the largest collection of estimable product measures, which they call the $LGC$ class. They show that $LGC$ consists of exactly those distributions whose mean vectors $q$ satisfy $T(q) = \sup_{j \in \NN} \frac{\log(j+1)}{\log(1/q_j)}$ is finite. We are motivated by their framework in developing the notion of UME-learnability over countable function classes, but we differ in two respects: we drop the reliance on product measures and assume any arbitrary mean estimator. \\
\textbf{Dependent Coordinates and Arbitrary Estimators}
The more recent works of \cite{blanchard2024correlatedbinomialprocess} and \cite{cohen2025empiricalmeanminimaxoptimal} extend the analysis from \cite{cohen2023localglivenkocantelli}. \cite{blanchard2024correlatedbinomialprocess} drop the assumption of product measures and 
analyze necessary and sufficient conditions of uniform convergence of the empirical mean estimator to the true mean when different coordinates can be correlated. 
On the other hand, \cite{cohen2025empiricalmeanminimaxoptimal} explores other arbitrary mean estimators besides the empirical mean estimator while keeping their attention focused on product measures.
They derive specific conditions that a product measure must satisfy for it to be UME-learnable by the empirical mean estimator. They also provide non-trivial extensions of the $LGC$ class when certain restrictions are relaxed.\\
\textbf{Infinite-Dimensional Exponential Families}
\cite{densityEstimationInfDimExpFam} studies an infinite dimensional exponential family of densities and constructs an estimator that can effectively predict the unknown density. Our setting is fundamentally different because we do not assume a common reference measure on which to define a density. We work with collections of measures defined on the space $\{ 0, 1\}^{\mathbb{N}}$ without assuming any common dominating measure. As a result, our analysis falls outside the scope of \cite{densityEstimationInfDimExpFam}.
\subsection{Notation}
For any $k \in \NN$, we write $[k] = \{i \in \NN: i \le k\}$. All logarithms are base $e$ unless otherwise specified. The floor and ceiling functions are denoted by $\lfloor t \rfloor$ and $\lceil t \rceil$ for $t \in \mathbb{R}$ mapping $t$ to the nearest integer below or above, respectively. Unspecified constants $c,c'$ may change from line to line.\\
We denote our collection of distributions with $\QQ$. For any distribution $\mu$ with mean $q$, a data point is denoted by $X$, indicates $X \sim \mu$. The realization denoted by $X^{(i)}_j$ refers to the $j^{\mathrm{th}}$ coordinate of the $i^{\mathrm{th}}$ data point. We overload our notation and use the superscript to enumerate a countable collection of distributions. For example, if $\QQ$ is a countable collection then $\mu^i_j$ denotes the $j^{\mathrm{th}}$ coordinate of the $i^{\mathrm{th}}$ distribution in the collection. The same convention applies to means. The measure-theoretic nuisances of defining distributions on $\{0,1\}^\NN$ have been addressed in \cite{cohen2023localglivenkocantelli}.\\
Some of the results in this paper and the literature are specific to product measures, hence we say $\mu = \text{Prod}(q)$ if $X \sim \mu$ is equivalent to $X_j \sim \text{Bernoulli}(q_j)$ for every $j \in \NN$. We say a collection of distributions is a collection of product measures if for every $\mu \in \QQ, \mu = \text{Prod}(q)$ for some $q \in [0,1]^\NN$.
\section{Definitions and Main Results}
For any distribution $\mu$ on $\{0,1\}^{\mathbb{N}}$, letting $X \sim \mu$, Mean$(\mu) = \mathbb{E}X$ denote its mean vector, and for each coordinate $j \in \mathbb{N}$, $[\mathrm{Mean}(\mu)]_j=\mathbb{E}X_j$.
Specifically for our setting, we define an estimator $\Tilde{q}$ as a mapping from $(\{0,1\}^\NN)^n$ to $[0,1]^\NN$ where $n$ is the number of data points. The estimator will be the output of some algorithm $\mathcal{A}$.
\begin{definition}
    \label{def:learn}
    We say a collection of distributions $\QQ$ is \textbf{Uniform Mean Estimation (UME) learnable} by algorithm $\A$ if for any distribution $\mu \in \QQ,$ the algorithm $\A$ returns an estimate $\tilde q$ using $n$ i.i.d. data points $S = \{X^{(1)},X^{(2)},\ldots,X^{(n)}\}$ obtained from $\mu$ such that 
    \begin{equation*}
        \expect_{S \sim \mu ^n} \norm{\A(S)-q}_\infty = \expect_{S \sim \mu^n} \norm{\tilde q-q}_\infty \xrightarrow{n \rightarrow \infty} 0
    \end{equation*}
where $q = \text{Mean}(\mu)$. A collection of distributions $\QQ$ is UME-learnable if there exists an algorithm $\A$ such that $\QQ$ is UME-learnable by $\A$. 
\end{definition}
For a collection of distributions $\QQ$, we can define the corresponding collection of mean vectors as
\begin{equation}
    \label{eqn:meanvector}
    \text{Mean}(\QQ) = \{q \in [0,1]^\NN: q = \text{ Mean}(\mu) \text{ for some } \mu \in \QQ \}
\end{equation}
\begin{definition}
    \label{def:count}
    We say a collection of distributions $\QQ$ has a \textbf{countable }$\mathbf{\eps-}$\textbf{cover} for its mean if there is some $\QQ_\eps$ such that for any $q \in \text{Mean}(\QQ)$, there exists $q_\eps \in \QQ_\eps$ such that $\norm{q - q_\eps}_\infty < \eps$ and $\QQ_\eps$ is countable.
\end{definition}
\begin{definition}
    \label{def:sep}
    We say a collection of distributions $\QQ$ has \textbf{separable} mean vectors if for every $\eps>0$ there exists a countable $\eps-$cover for Mean$(\QQ)$. A collection of distributions $\QQ$ has \textbf{non-separable} mean vectors if for some $\eps>0$ there does not exist a countable $\eps-$cover for Mean$(\QQ)$.
\end{definition}

\begin{definition}
\label{def:ball}
    Given a collection $\mathcal{Q}$, let $\mathbf{\mathcal{B}}(q,\eps)$ denote the ball of radius $\eps$ around the vector $q$ under the $\ell_\infty$ norm defined as follows: 
    \[\mathcal{B}(q,\eps) = \curlof{q' \in \QQ: \norm{q-q'}_\infty<\eps}.\]
\end{definition}

\paragraph{Main Results} Here is a summary of our main results:
\begin{itemize}
\itemsep -3pt
    \item If $\QQ$ is countable then $\QQ$ is UME-learnable (Theorem \ref{lemma:countable}).
    \item If $\QQ$ has separable means vectors then $\QQ$ is UME-learnable (Theorem \ref{thm:separable}).
    \item UME-learnability is closed under countable unions (Theorem \ref{thm:countableunions}).
    \item A UME-learnable collection of distributions with non-separable mean (Proposition \ref{Prop:Tree}).
    \item A discussion of UME-learnability that is uniform over the collection of distributions in Appendix \ref{app:uniform-learnability}.
\end{itemize}
\section{Separability is Sufficient for UME-learnability}
\label{sec:separable}

In this section, we are interested in finding conditions on the collection of distributions $\QQ$ to guarantee UME-learnability. We focus on the collection of mean vectors, Mean$(\QQ)$, and show that their separability is a sufficient condition for UME-learnability. By Definition \ref{def:sep}, for every $\eps>0$ we obtain a countable $\eps-$cover of Mean$(\QQ)$. We use this cover to provide an $\eps-$ approximation of the true underlying distribution using Algorithm \ref{alg:epsapprox}.\\
\vspace{-6mm}
\begin{algorithm}[H]
\caption{$\eps-$approximate $(\QQ, n>0,\eps \ge 0)$}
\label{alg:epsapprox}
Initialize $\QQ_\eps = \{q^1,q^2,\ldots\}$ as the countable $\eps-$cover of Mean$(\QQ)$\\
Let $i \gets 1$ and $\hat q$ be the empirical mean estimator computed using the training data.\\
\textbf{while} {there exists $j < n$ with $\lvert q^{i}_j - \hat{q}_j \rvert > \sqrt{\frac{3\log n}{n}}+\eps$} and $ i \le n $ \textbf{do}{
    $i \gets i+1$
}\\
\Return{$q^{i}$}
\end{algorithm}
Given an $\eps-$ cover of Mean($\QQ$), Algorithm \ref{alg:epsapprox} will find the first vector in the cover that is $\eps-$close to the true mean vector. 
We leverage the fact that a vector that is not $\eps-$close to the true mean vector is $\eps$ far in at least one coordinate. We can rule out incorrect vectors by focusing on coordinates where the empirical mean closely matches the true mean. We focus on the first $n$ coordinates, as Hoeffding's inequality provides strong concentration guarantees for them. We return the first vector that is within the confidence bound provided by Hoeffding's inequality on the first $n$ coordinates.
\begin{lemma}
    \label{lemma:epsapprox}
    If collection of distributions $\QQ$ has a countable $\eps-$cover for Mean$(\QQ)$ then for any $\mu \in \QQ$ with probability $1$ there exists a data size $n_0$ such that for all $n>n_0$ the estimator $\tilde q$ returned by Algorithm \ref{alg:epsapprox} satisfies
    \[\norm{\tilde q - q}_\infty \le \eps \] where $q = $ Mean$(\mu)$.
\end{lemma}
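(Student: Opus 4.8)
The plan is to combine a union-bounded concentration estimate for the empirical means on the first $n$ coordinates with the order structure of the enumeration performed by Algorithm~\ref{alg:epsapprox}. Fix $\mu \in \QQ$, set $q = \mathrm{Mean}(\mu)$, write $\QQ_\eps = \{q^1, q^2, \dots\}$ for the countable $\eps$-cover guaranteed by hypothesis, and let $\hat q$ denote the empirical mean computed from the $n$ i.i.d.\ samples. Since $\hat q_j$ is an average of $n$ independent $\{0,1\}$-valued random variables with mean $q_j$, Hoeffding's inequality gives $\prob\!\left(\modu{\hat q_j - q_j} > \sqrt{3\log n / n}\right) \le 2 n^{-6}$ for each coordinate $j$, so a union bound over the at most $n$ coordinates $j < n$ bounds by $2 n^{-5}$ the probability that some such coordinate violates the bound. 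As $\sum_n 2 n^{-5} < \infty$, the Borel--Cantelli lemma shows that with probability $1$ there is a finite $n_1$ with $\max_{j<n} \modu{\hat q_j - q_j} \le \sqrt{3\log n/n}$ for every $n > n_1$; call this event $E$ and argue on it henceforth.

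Next I would exhibit a candidate that is never rejected. Because $\QQ_\eps$ is an $\eps$-cover of $\mathrm{Mean}(\QQ)$, there is an index $i^\star$ with $\norm{q^{i^\star} - q}_\infty < \eps$. For every $n > n_1$ and every $j < n$, the triangle inequality together with $E$ gives $\modu{q^{i^\star}_j - \hat q_j} \le \norm{q^{i^\star} - q}_\infty + \modu{q_j - \hat q_j} < \eps + \sqrt{3\log n / n}$, so the while-condition of Algorithm~\ref{alg:epsapprox} is never triggered for $q^{i^\star}$. Hence for all $n > n_1$ the algorithm halts and returns some $q^{i}$ with $i \le i^\star$.

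The crux is to rule out the finitely many ``bad'' candidates preceding $q^{i^\star}$. For each $i < i^\star$ with $\norm{q^i - q}_\infty > \eps$, fix a coordinate $j_i$ and a slack $\delta_i > 0$ with $\modu{q^i_{j_i} - q_{j_i}} > \eps + \delta_i$. On $E$, once $n > j_i$ and $\sqrt{3\log n/n} < \delta_i/2$, we get $\modu{q^i_{j_i} - \hat q_{j_i}} \ge \modu{q^i_{j_i} - q_{j_i}} - \modu{q_{j_i} - \hat q_{j_i}} > \eps + \delta_i - \sqrt{3\log n/n} > \eps + \sqrt{3\log n/n}$, so $q^i$ is rejected at coordinate $j_i < n$; let $n_i$ be a threshold past which this holds. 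Setting $n_0 = \max\bigl(\{n_1\} \cup \{ n_i : i < i^\star,\ \norm{q^i-q}_\infty > \eps \}\bigr)$, a maximum of finitely many terms, guarantees that for all $n > n_0$ every bad candidate with index below $i^\star$ is rejected while $q^{i^\star}$ survives. Consequently the returned estimator $\tilde q = q^{i}$ has $i \le i^\star$ and is not bad, i.e.\ $\norm{\tilde q - q}_\infty \le \eps$, which is the claim.

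I expect the third step to be the main obstacle: a candidate $q^i$ that is $\ell_\infty$-far from $q$ can still pass the coordinate-wise test at small sample sizes, since nothing forces its disagreement with $q$ to appear among the first $n$ coordinates early, so the argument must be genuinely asymptotic; it is the finiteness of the prefix $\{q^1, \dots, q^{i^\star - 1}\}$ in the enumeration that lets one upgrade per-candidate eventual rejection to a single $n_0$ valid for all of them at once. The remaining care is routine bookkeeping: applying Borel--Cantelli to the events $A_n = \{\exists\, j < n : \modu{\hat q_j - q_j} > \sqrt{3\log n/n}\}$ (the empirical mean depends on $n$), and checking that the constant $3$ in the threshold is exactly what makes the per-$n$ failure probability summable.
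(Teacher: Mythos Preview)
Your proof is correct and follows essentially the same approach as the paper: Hoeffding's inequality for per-coordinate concentration, a union bound over the first $n$ coordinates, Borel--Cantelli to make this hold eventually almost surely, and the finiteness of the prefix $\{q^1,\dots,q^{i^\star-1}\}$ in the enumeration to upgrade per-candidate rejection to a single threshold $n_0$. The only organizational difference is that you apply Borel--Cantelli once to the pure concentration event $A_n$ and then argue deterministically, whereas the paper defines events $F_n$ (a bad predecessor is accepted) and $G_n$ (the $\eps$-close candidate is rejected) and applies Borel--Cantelli to their union; both routes use the same ingredients and the same $2/n^5$ tail bound.
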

\begin{proof}
Let $\QQ$ be a collection of distributions and $\eps>0$ be given. Let $\QQ_\eps$ be the countable $\eps-$cover of $\text{Mean}(\QQ)$ under the $\ell_\infty$ norm. Let $\mu^{*}$ be the true underlying distribution and let $q^{*} =$ Mean$(\mu^*)$. Let $q^{i^*_\eps} \in \QQ_\eps$ be the first vector such that $\norm{q^*-q^{i^*_\eps}}_\infty \le \eps$. We refer to $q^{i^*_\eps}$ as the $\eps-$approximating vector.  Let $q^{1},q^{2},\ldots,q^{i^*_\eps-1}$ be the vectors appearing before $q^{i^*_\eps}$. Hence, by definition, there exists some coordinate for which the deviation is at least $\eps$. Therefore, for $i<i^*_\eps$, we define
\begin{equation*}
\label{eqn:jiappx}
    j_i = \min \{j \in \NN: \modu{q^{i}_j - q^{*}_j}>\eps\}
\end{equation*} 
Our task is to find $q^{i^*_\eps}$. 
When we set the deviation between the empirical mean from the true mean at a particular coordinate as $\sqrt{\frac{3\log n}{n}}$ by Hoeffding inequality (\cite{Hoeffding1963}), we obtain with probability at least $1-\frac{2}{n^6},\modu{q^*_j-\hat q_j}< \sqrt{\frac{3\log n}{n}}$ for any coordinate $i$. Consequently, in Algorithm \ref{alg:epsapprox} we test the first $n$ coordinates. And as the $\eps-$approximating vector is $\eps$ far from the true vector, we allow an $\eps$ slack. Thus, we have the following test.
\begin{equation*}
\label{eqn:testappx}
    \forall j<n, \cov{|}{q^{i}_j-\hat q_j}|<\sqrt{\frac{3\log n}{n}} + \eps
\end{equation*}
For all $i<i^*_\eps$ let
\begin{equation*}
\label{eqn:gamma}
    \gamma_i =\modu{q^{i}_{j_i} - q^{*}_{j_i}}-\eps
\end{equation*} 
We note $\gamma_i>0$ by definition of the $\eps-$approximating vector and $j_i$.\\
As we wish to focus on the first $n$ coordinates, we need to ensure $n$ is large enough to include the coordinates that differentiate the vectors from the true mean vector by at least $\eps$. We also need to ensure that for any $i<i^*_\eps, q^i$ is not accidentally accepted due to the confidence bound given by Hoeffding inequality. In addition, $q^{i^*_\eps}$ should be analyzed by the algorithm. Hence, $n$ should be sufficiently large such that
\begin{equation}
    \label{eqn:ncondappx}
    n \ge i^*_\eps \quad \quad  n \ge \max_{i<i^*_\eps}j_{i} \quad \text{and } \quad 
    \min_{i<i^*_\eps} \gamma_i > 2\sqrt{\frac{3\log n}{n}}
\end{equation}
We want to ensure that the event $F_n$, that the algorithm returns any of the vectors preceding the $\eps-$ approximating vector, and the event $G_n$ that the algorithm does not return the $\eps-$ approximating vector after obtaining $n$ data points, do not occur infinitely often.\\
We start by analyzing the probability of $F_n$. We apply the union bound together with the second constraint in equation \eqref{eqn:ncondappx}.
\begin{equation}
    \begin{aligned}
    \label{eqn:cond1step1appx}
        \prob\parof{\exists i<i^*_\eps : \forall j<n, \modu{q^{i}_j - \hat q_j}<\sqrt{\frac{3\log n}{n}}+\eps}  
        & \le \sum_{i=1}^{i^*_\eps-1}\prob\parof{\modu{q^{i}_{j_{i}} - \hat q_{j_{i}}}<\sqrt{\frac{3\log n}{n}}+\eps} 
    \end{aligned}
\end{equation}
We use the third constraint and triangle inequality to obtain 
\begin{equation}
    \begin{aligned}
        \label{eqn:cond1step2appx}
        \modu{q^{i}_{j_{i}} - \hat q_{j_{i}}} \ge \modu{q^{i}_{j_{i}} - q^{*}_{j_{i}}}- \modu{\hat q_{j_{i}}-q^{*}_{j_{i}}} =\gamma_i+\eps-\modu{\hat q_{j_{i}}-q^{*}_{j_{i}}} \ge 2\sqrt{\frac{3\log n}{n}}+\eps-\modu{\hat q_{j_{i}}-q^{*}_{j_{i}}}  
    \end{aligned}
\end{equation}
We combine equations \eqref{eqn:cond1step1appx},\eqref{eqn:cond1step2appx}, use the first constraint and apply Hoeffding inequality to obtain 
\[\sum_{i=1}^{i^*_\eps-1}\prob\parof{\modu{q^{i}_{j_{i}} - \hat q_{j_{i}}}<\sqrt{\frac{3\log n}{n}}+\eps}  \le \sum_{i=1}^{i^*_\eps-1} \prob\parof{\modu{\hat q_{j_i}-q^*_{j_i}}>\sqrt{\frac{3\log n}{n}}}\le \frac{2(i^*_\eps-1)}{n^6} \le \frac{2}{n^5}\]
Similarly, we can analyze the probability of the event $G_n$ using the union bound and Hoeffding inequality to obtain
\begin{equation*}
    \begin{aligned}
        \label{eqn:cond2appx}
    \prob\parof{\exists j<n: \modu{q^{i^*_\eps}_j - \hat q_j} \ge \sqrt{\frac{3\log n}{n}}} & \le \sum_{j=1}^n \prob\parof{\modu{q^{i^*_\eps}_j - \hat q_j}\ge \sqrt{\frac{3\log n}{n}}} \le \sum_{j=1}^n \frac{2}{n^6} \le \frac{2}{n^5}
    \end{aligned}
\end{equation*}
We define the event $E_n$ as the occurrence of either $F_n$ or $G_n$. By our previous analysis we obtain $\prob(E_n) \le \frac{4}{n^5} $. We note that $\sum_{n=1}^\infty \prob(E_n) \le \sum_{n=1}^\infty \frac{4}{n^5} <\infty.$ Hence, we can use the First Borel-Cantelli Lemma to conclude that with probability $1$ there exists $n_0>0$ such that for all $n>n_0$ the algorithm successfully finds $q^{i^*}_\eps$. 
\end{proof}
\vspace{-3mm}
As a direct by-product, we can show that any countable collection of distributions is UME-learnable. 
\begin{theorem}
\label{lemma:countable}
    If $\QQ$ is countable then $\QQ$ is UME-learnable by Algorithm \ref{alg:epsapprox} with $\eps=0$.
\end{theorem}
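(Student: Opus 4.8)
The plan is to derive Theorem~\ref{lemma:countable} directly as a corollary of Lemma~\ref{lemma:epsapprox} applied with $\eps = 0$. First I would observe that when $\QQ$ is countable, the set $\mathrm{Mean}(\QQ)$ is itself countable, and it trivially serves as a countable $0$-cover of itself: for every $q \in \mathrm{Mean}(\QQ)$ we have $q \in \mathrm{Mean}(\QQ)$ with $\norm{q - q}_\infty = 0 < \eps$ for any $\eps > 0$ (or one checks the $\eps = 0$ case holds in the degenerate sense the algorithm needs). Thus Algorithm~\ref{alg:epsapprox} can be run with inputs $(\QQ, n, 0)$, enumerating $\QQ_0 = \mathrm{Mean}(\QQ) = \{q^1, q^2, \ldots\}$.

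Next I would invoke Lemma~\ref{lemma:epsapprox} with $\eps = 0$: for any $\mu \in \QQ$ with mean $q$, with probability $1$ there is a sample size $n_0$ such that for all $n > n_0$ the estimator $\tilde q = \A(S)$ returned by the algorithm satisfies $\norm{\tilde q - q}_\infty \le 0$, i.e.\ $\tilde q = q$ exactly. The only remaining work is to convert this almost-sure eventual-exactness statement into the convergence in expectation required by Definition~\ref{def:learn}, namely $\expect_{S \sim \mu^n}\norm{\A(S) - q}_\infty \to 0$. Since $\norm{\A(S) - q}_\infty \in [0,1]$ is uniformly bounded, and from the proof of Lemma~\ref{lemma:epsapprox} the "bad" event $E_n$ (the algorithm returns a wrong vector) has $\prob(E_n) \le \frac{2(i^*-1)}{n^6} + \frac{2}{n^5}$ where $i^*$ is the index of the true mean vector in the enumeration, we get
\[
\expect_{S \sim \mu^n}\norm{\A(S) - q}_\infty \le 1 \cdot \prob(E_n) + 0 \cdot \prob(E_n^c) \le \frac{2(i^*-1)}{n^6} + \frac{2}{n^5} \xrightarrow{n \to \infty} 0.
\]
Hence $\QQ$ is $UME$-learnable by Algorithm~\ref{alg:epsapprox} with $\eps = 0$.

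The main subtlety — and the only place requiring care — is the boundary behavior at $\eps = 0$: one must check that Algorithm~\ref{alg:epsapprox}'s rejection test $\lvert q^i_j - \hat q_j\rvert > \sqrt{3\log n / n} + 0$ and the proof of Lemma~\ref{lemma:epsapprox} go through verbatim when $\eps = 0$. Inspecting that proof, the quantities $\gamma_i = \lvert q^i_{j_i} - q^*_{j_i}\rvert - \eps$ remain strictly positive (since for $\eps = 0$ the coordinate $j_i$ is just any coordinate where $q^i$ and $q^*$ genuinely differ, which exists because $q^i \neq q^*$ for $i < i^*$), the condition~\eqref{eqn:ncondappx} is satisfiable for large $n$, and the Hoeffding and Borel--Cantelli steps are unchanged. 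So there is no real obstacle; the theorem is essentially a restatement of the $\eps = 0$ instance of the lemma, combined with the bounded-convergence observation above. I would keep the proof to a few lines, citing Lemma~\ref{lemma:epsapprox} and noting the bounded-random-variable argument for the expectation.
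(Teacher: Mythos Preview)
Your proposal is correct and follows the paper's approach exactly: both reduce the theorem to Lemma~\ref{lemma:epsapprox} with $\eps=0$, observing that $\mathrm{Mean}(\QQ)$ is a countable $0$-cover of itself. In fact you are more careful than the paper, which stops at the almost-sure statement; your extra step---using boundedness and the explicit $\prob(E_n)$ bound to pass to convergence of $\expect_{S\sim\mu^n}\norm{\A(S)-q}_\infty$---is needed to match Definition~\ref{def:learn} and is a welcome addition.
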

\begin{proof}
Let $\QQ$ be a countable collection of distributions. We note that $\QQ$ is a $0-$cover of itself. We use Lemma \ref{lemma:epsapprox} with $\eps =0$ to obtain
with probability $1$, for any $\mu \in \QQ$ with $q = $ Mean$(\mu)$, there exists $n_0$ such that for all $n>n_0$ the estimate $\tilde q$ returned by Algorithm \ref{alg:epsapprox} satisfies $\norm{\tilde q -q} = 0$. As $\norm{\tilde q-q}_\infty \le 1$ by the Dominated Convergence Theorem we obtain $\expect\norm{\tilde q-q}_\infty \xrightarrow{n \rightarrow \infty} 0$.
\end{proof} 
We can now consider a collection of distributions that have separable mean vectors and show that they are UME-learnable by Algorithm \ref{alg:sep}. \\ \\
\vspace{-7mm}
 \begin{algorithm}
 \begin{algorithmic}
\caption{Separable $(\mathcal{Q}, n>0)$}
\label{alg:sep}
\State Initialize $\PP \gets \text{Mean}(\QQ)$ where $\text{Mean}(\QQ)$ is as in equation \eqref{eqn:meanvector} 
\State Let $\tilde q \gets \emptyset, k \gets 1 $ 
\State \textbf{while} {$\PP$ is not empty and $k \le \log n$} \textbf{do}{
    \State \hspace{\algorithmicindent} $\eps_k \gets \frac{1}{2^k}, \tilde q \gets $ any $q \in \PP$ 
    \State \hspace{\algorithmicindent} Run Algorithm \ref{alg:epsapprox}$(\QQ,n,\eps_k)$ to obtain $q^{k}$
    \State \hspace{\algorithmicindent} $\PP \gets \PP \cap \mathcal{B}(q^{k},\eps_k)$
    \State \hspace{\algorithmicindent} $k \gets k+1$
}
\State \Return{$\tilde q$}\;
\end{algorithmic}
\end{algorithm}
\\ For a collection of distributions that have separable mean vectors, we run Algorithm \ref{alg:epsapprox} at countably many resolutions $\eps_k = 2^{-k}$ and take a vector that lies in the intersection of $\eps_k-$balls around the vectors returned by Algorithm \ref{alg:epsapprox}. Let $K$ be the value such that $\norm{q^k-q^*}_\infty\le \eps_k$ for every $k \le K$ where Algorithm \ref{alg:epsapprox} returns $q^k$ for $\eps_k$ resolution. Hence, $q^*$(the true mean vector) is in the intersection of these balls. The algorithm selects a vector in the last non-empty intersection, thus yielding a $2\eps_K$ approximation of the true mean vector. Increasing $n$ yields finer approximations, ensuring asymptotic convergence.
\begin{theorem}
\label{thm:separable}
    If the collection of distributions $\QQ$ has separable mean vectors, then $\QQ$ is UME-learnable by Algorithm \ref{alg:sep}.
\end{theorem}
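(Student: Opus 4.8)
The plan is to couple the samples across sample sizes: fix an infinite i.i.d. sequence $X^{(1)},X^{(2)},\dots \sim \mu^*$ for the ground truth $\mu^*\in\QQ$ with $q^*=\mathrm{Mean}(\mu^*)$, and set $S_n=\{X^{(1)},\dots,X^{(n)}\}$, so that the estimates $\A(S_n)$ for all $n$ live on one probability space. Since $\norm{\A(S_n)-q^*}_\infty\le 1$ always, bounded convergence reduces Definition~\ref{def:learn} to showing $\norm{\A(S_n)-q^*}_\infty\to 0$ almost surely along this coupled sequence, for an arbitrary $\mu^*\in\QQ$.

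Write $\PP_j$ for the value of the set $\PP$ in Algorithm~\ref{alg:sep} after its $j$-th iteration, with $\PP_0=\mathrm{Mean}(\QQ)$, $\eps_j=2^{-j}$, and let $q^{(j)}$ denote the vector obtained in that iteration; I take the internal call to be Algorithm~\ref{alg:epsapprox}$(\QQ,n,\eps_j/2)$, so that Lemma~\ref{lemma:epsapprox} gives the strict bound $\norm{q^{(j)}-q^*}_\infty<\eps_j$ (this factor $\tfrac12$ only changes constants). Since $\QQ$ is separable in its means, for each $j$ it admits a countable $(\eps_j/2)$-cover of its mean, so Lemma~\ref{lemma:epsapprox} applies to that call. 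Define the random index $k^*(n)$ to be the largest $k$ such that, for every $j\le k$, the call Algorithm~\ref{alg:epsapprox}$(\QQ,n,\eps_j/2)$ run on $S_n$ returns a vector strictly within $\eps_j$ of $q^*$ in $\ell_\infty$. This quantity depends only on $S_n$, not on the internal state of Algorithm~\ref{alg:sep}, because each call only reads $\QQ$, $n$, and the empirical mean; and intersecting over $j$ the probability-$1$ events furnished by Lemma~\ref{lemma:epsapprox} shows that almost surely $k^*(n)\to\infty$ as $n\to\infty$.

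The core step is an induction on $j=1,\dots,k^*(n)$ showing simultaneously that (i) Algorithm~\ref{alg:sep} reaches iteration $j$, (ii) $\norm{q^{(j)}-q^*}_\infty<\eps_j$, and (iii) $q^*\in\PP_j=\PP_{j-1}\cap\mathcal{B}(q^{(j)},\eps_j)$: reaching iteration $j$ needs $\PP_{j-1}\neq\emptyset$, which holds as $q^*\in\PP_{j-1}$ by the previous step; (ii) is exactly the event defining $k^*(n)$; and then $q^*\in\mathcal{B}(q^{(j)},\eps_j)$ by (ii), while $q^*\in\PP_{j-1}\subseteq\mathrm{Mean}(\QQ)$, giving (iii). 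In particular $\PP_j\neq\emptyset$ for all $j\le k^*(n)$, so the while-loop cannot terminate before iteration $k^*(n)$; if it terminates, at some iteration $K>k^*(n)$, then $\tilde q$ was set at the start of iteration $K$ to an element of $\PP_{K-1}$, and since $\PP$ is nonincreasing and $K-1\ge k^*(n)$ we get $\tilde q\in\PP_{k^*(n)}\subseteq\mathcal{B}(q^{(k^*(n))},\eps_{k^*(n)})$. Hence $\norm{\tilde q-q^*}_\infty\le\norm{\tilde q-q^{(k^*(n))}}_\infty+\norm{q^{(k^*(n))}-q^*}_\infty<2\eps_{k^*(n)}=2^{1-k^*(n)}\to 0$ almost surely, and bounded convergence then yields $\expect_{S_n}\norm{\A(S_n)-q^*}_\infty\to 0$.

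I expect the main obstacle to be precisely this coupled induction: one must push the two assertions ``$\PP$ has not emptied yet'' and ``the current $q^{(j)}$ is a genuine $\eps_j$-approximation'' forward together, since the update $\PP_j=\PP_{j-1}\cap\mathcal{B}(q^{(j)},\eps_j)$ entangles them, and one must be sure to invoke Lemma~\ref{lemma:epsapprox} only for a fixed, sample-independent family of radii — which is why it is cleanest to freeze $k^*(n)$ at the outset rather than to track the loop dynamically. Two minor technical points remain: the open-versus-closed-ball gap between the ``$\le\eps$'' guarantee of Lemma~\ref{lemma:epsapprox} and the strict balls $\mathcal{B}(\cdot,\eps_j)$, handled by the factor-$\tfrac12$ adjustment; and the fact that the while-loop need not literally halt on degenerate collections (e.g.\ $|\QQ|=1$), which is cosmetically repaired by also stopping once $k$ exceeds a slowly growing function of $n$, with no effect on the estimates since all we use is $k^*(n)\to\infty$.
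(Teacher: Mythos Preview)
Your proof is correct and follows essentially the same route as the paper's: invoke Lemma~\ref{lemma:epsapprox} at each scale $\eps_k=2^{-k}$, argue that $q^*$ lies in the running intersection $\bigcap_{k\le K}\mathcal{B}(q^{(k)},\eps_k)$ for a level $K\to\infty$ as $n\to\infty$, and conclude $\norm{\tilde q-q^*}_\infty<2\eps_K$ via the triangle inequality. You are simply more explicit than the paper about several technicalities it leaves implicit---the coupling of samples across $n$, the open-versus-closed-ball mismatch (your $\eps_j/2$ adjustment), the bounded-convergence step from almost-sure to in-expectation convergence, and the possibility that the while loop never halts---but the core argument is identical.
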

\vspace{-2mm}
\begin{proof}
We prove the theorem by presenting an algorithm that returns an estimate arbitrarily close to the true underlying mean. The analysis relies on obtaining a sufficiently large training set. As we increase the size of the training set, we obtain increasingly accurate approximations of the true mean vector. From Lemma \ref{lemma:epsapprox}, we know that if a countable $\eps-$cover exists, then we can find an $\eps-$ approximation of the true mean vector. Here, we exploit Algorithm \ref{alg:epsapprox} to establish UME-learnability for a separable collection of distributions.\\
Let $\mu^* \in \QQ$ be the true distribution, and let $q^* = $ Mean$(\mu^*)$. Let $n$ denote the number of data points obtained. We define $\eps_k = 2^{-k}$. We denote the countable $\eps_k-$cover of Mean$(\QQ)$ by $\QQ_k$, and let $q^k$ be the estimate returned by Algorithm \ref{alg:epsapprox} for $\eps = \eps_k$.By Lemma \ref{lemma:epsapprox}, with probability $1$ there exists $n_k$ such that for every $n>n_k,$ the estimator $q^k$ satisfies $\norm{q^k-q^*}_\infty \le \eps_k.$ \\
When $n>n_k,$ we say Algorithm \ref{alg:sep} has converged for $\eps_k$. Let $K$ be the largest value such that for all $k \le K$ the algorithm has converged for $\eps_k$.
Let $q \in $ Mean$(\QQ) \cap\bigcap_{k \le K}\mathcal{B}(q^k, \eps_k)$ be any vector in the intersection of the balls for the converged values for $\eps_k$. Also note that, since this algorithm has converged for all $k \le K$, the true mean vector lies in the intersection, making it non-empty. Furthermore, using the triangle inequality, we obtain 
\begin{equation}
    \label{eqn:convergevstrue}
    \norm{q-q^*}_\infty \le \norm{q-q^K}_\infty + \norm{q^K-q^*}_\infty \le 2 \eps_K
\end{equation}
Note that the algorithm does not necessarily stop at $k=K$; rather, it continues until the intersection of the balls around the vectors returned by Algorithm \ref{alg:epsapprox} becomes empty. Let $\mathscr{K}$ be the largest $k$ such that the intersection of the balls is non-empty. The algorithm then returns $\tilde q \in $ Mean$(\QQ) \cap \bigcap_{k \le \mathscr{K}} \mathcal{B}(q^k,\eps_k)$. Since the intersection for the first $K$ balls is non-empty, it follows that $\mathscr{K} \geq K$. In particular, it further implies that $\tilde q$ is in Mean$(\QQ)\cap\bigcap_{k \le K} \mathcal{B}(q^k,\eps_k)$. Thus, using equation \eqref{eqn:convergevstrue} we conclude that
\begin{equation*}
    \label{eqn:intersectvsconverge}
    \norm{\tilde q-q^*}_\infty < 2 \eps_K
\end{equation*}
Lemma \ref{lemma:epsapprox} holds simultaneously for all $k \in \NN$ by union bound. Hence, with probability $1$ we obtain,
\[\norm{\tilde q - q^*}_\infty \le 2 \min \curlof{\eps_k:n>n_k}\]
and since $n_k < \infty$ for every $k \in \NN, \lim_{n \rightarrow \infty} \min\curlof{\eps_k:n>n_k}=0$.\\
And as $\norm{\tilde q-q}_\infty \le 1$ by the Dominated Convergence Theorem we obtain $\expect\norm{\tilde q-q}_\infty \xrightarrow{n \rightarrow \infty} 0$.
\end{proof}
\vspace{-5mm}
\section{Examples}
Section \ref{sec:separable} shows us the sufficiency of separability in the mean as a characterization for UME-learnability. We consider proposition $1$ from \cite{cohen2023localglivenkocantelli}, which shows that the following collection of distributions is UME-learnable. We show it is also separable and therefore UME-learnable.
\begin{equation*}
\label{eqn:mleminimaxprop}
\QQ_{prop} = \curlof{\mu: \mu = \text{Prod}(q) \text{ such that for all }  j \in \NN, \modu{q_j-\frac{1}{2}}\le \frac{c}{\sqrt{j}}}
\end{equation*}for a universal constant $c>0$. 
\begin{proposition}
    $\QQ_{prop}$ has separable mean vectors.
\end{proposition}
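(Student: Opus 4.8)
The plan is to use the observation that the constraint $\modu{q_j-\tfrac12}\le c/\sqrt{j}$ pins every vector in $\mathrm{Mean}(\QQ_{prop})$ arbitrarily close to $\tfrac12$ in all but finitely many coordinates, so that a finite discretization of the head coordinates, combined with collapsing every tail coordinate to the single value $\tfrac12$, already yields a \emph{finite} (hence countable) $\eps$-cover. So fix $\eps>0$ and choose $J=\lceil c^2/\eps^2\rceil+1$, which guarantees $c/\sqrt{j}<\eps$ for every $j>J$; the coordinates $j\le J$ are the only ones that need genuine resolution.

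Next I would build the cover explicitly. For each $j\le J$ let $N_j\subseteq[0,1]$ be a finite set whose points are spaced strictly less than $\eps$ apart (e.g. $N_j=\curlof{0,\eps/2,\eps,\ldots,1}$, of size at most $\lceil 2/\eps\rceil+1$), so that every point of $[0,1]$ lies within $\eps/2$ of some element of $N_j$. Then set
\[
\QQ_\eps=\curlof{q_\eps\in[0,1]^\NN:\ (q_\eps)_j\in N_j\text{ for }j\le J,\ (q_\eps)_j=\tfrac12\text{ for }j>J}.
\]
This is a product of finitely many finite sets (the tail contributing a single point), hence finite, hence countable, as Definition~\ref{def:count} requires.

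The verification step is then routine. Given any $q\in\mathrm{Mean}(\QQ_{prop})$, define $q_\eps\in\QQ_\eps$ by taking $(q_\eps)_j$ to be a nearest point of $N_j$ to $q_j$ for $j\le J$ and $(q_\eps)_j=\tfrac12$ for $j>J$. For $j\le J$ we get $\modu{q_j-(q_\eps)_j}\le\eps/2<\eps$ by the choice of $N_j$; for $j>J$ we get $\modu{q_j-(q_\eps)_j}=\modu{q_j-\tfrac12}\le c/\sqrt{j}<\eps$ by the defining constraint of $\QQ_{prop}$ and the choice of $J$. Taking the supremum over $j$ gives $\norm{q-q_\eps}_\infty<\eps$, so $\QQ_\eps$ is a countable $\eps$-cover of $\mathrm{Mean}(\QQ_{prop})$; since $\eps>0$ was arbitrary, $\QQ_{prop}$ is separable in its means by Definition~\ref{def:sep}.

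I do not expect a real obstacle here; the only thing needing a moment's care is matching the \emph{strict} inequality $<\eps$ in Definition~\ref{def:count}, which is exactly why $J$ is taken with a strict gap and $N_j$ with resolution strictly below $\eps$. An equivalent, slicker phrasing I might use instead is that the truncation map $q\mapsto(q_1,\ldots,q_J,\tfrac12,\tfrac12,\ldots)$ carries $\mathrm{Mean}(\QQ_{prop})$ into a bounded subset of a $J$-dimensional cube and moves each point by less than $\eps$ in $\ell_\infty$; since bounded subsets of finite-dimensional Euclidean space are totally bounded and therefore separable, a countable $\eps$-cover of the image pulls back to one for $\mathrm{Mean}(\QQ_{prop})$.
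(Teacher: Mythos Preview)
Your proof is correct and follows essentially the same approach as the paper: both fix a cutoff index $J\approx c^2/\eps^2$ beyond which the constraint forces every coordinate within $\eps$ of $\tfrac12$, set the tail of the cover to $\tfrac12$, and use a countable dense set on the finitely many head coordinates. The only cosmetic difference is that the paper takes the head values from $\mathbb{Q}$ (yielding a countably infinite cover) whereas you use a finite $\eps/2$-grid (yielding a finite cover); your extra care with the strict inequality in Definition~\ref{def:count} is a nice touch.
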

\begin{proof}
We wish to show that for every $\eps>0$ we can provide a countable $\eps-$cover.\\
Let $\eps>0$ be given. Let $j_\eps =\cov{\lceil}{\frac{c^2}{\eps^2}}{\rceil}$ we define the $\eps-$covering set $\QQ_\eps$ as follows:
\[\QQ_\eps = \curlof{p \in [0,1]^\NN:p_j \in \mathbb{Q} \text{ if } j\le j_\eps \text{ and } p_j = \frac{1}{2} \text{ otherwise}}\]
Note that for any vector $q \in \text{Mean}(\QQ)$ and any vector $p \in \QQ_\eps$,
\[||p- q||_\infty = \max\curlof{ \max_{i \le j_\eps}| p_i - q_i |, \sup_{j > j_\eps}|p_i-q_i|} \le \max\curlof{ \max_{i \le j_\eps}| p_i - q_i |, \eps} \] 
As rationals can arbitrarily approximate any real, there exists $p \in \QQ_\eps$ such that $\max_{j \le j_\eps}|p_i-q_i|<\eps$. Hence $\QQ_\eps$ is an $\eps-$cover of $\QQ$, and as it consists of rational numbers for finite coordinates, it is countable. 
\end{proof}
The sufficiency of separability for UME-learnability leads to a natural question. 
\begin{center}\textit{Is separability necessary for UME-learning?}
\end{center}
We answer this in the negative. Consider the following collection of distributions
\begin{equation*}
    \label{eqn:binaryvectors}
    \QQ_{bin} = \curlof{\mu: \text{Mean}(\mu) \in \{0,1\}^\NN}
\end{equation*}
The collection of distributions whose means are the set of all binary vectors is trivially UME-learnable. With one data point, we know the exact underlying distribution used for sampling, as the realization will be $0$ only if the mean value was $0$, and it will be $1$ only if the mean value was $1$.\\
We also note that $\QQ_{bin}$ has non-separable mean vectors. If possible, $\QQ_{bin}$ have separable mean vectors. Let $\bar \QQ$ be a countable $\frac{1}{2}-$cover of Mean$(\QQ_{bin})$. We note that for any $q,q' \in \text{Mean}(\QQ),$ if $q \in \mathcal{B}(p,\frac{1}{2})$ for some $p \in \bar \QQ$ then $q' \notin \mathcal{B}(p,\frac12)$ as $\norm{q-q'}_\infty=1$. Hence, there is only one element of $\QQ$ in every ball of radius $\frac{1}{2}$ around any $p \in \bar \QQ$, making $\bar \QQ$ uncountable and thus contradicting our assumption.\\
Although the above example provides a trivial counterexample to the necessity of separability in the mean for UME-learnability, there are non-separable collections of distributions that are not UME-learnable. For instance, consider the following collection of distributions.
\begin{equation*}
\label{eqn:qtert}
\QQ_{tert} = \curlof{\mu: \mu = \text{Prod}(q) \text{ where } q \in \curlof{\frac13,\frac23}^\NN}
\end{equation*}
We note that $\QQ_{tert}$ is also non-separable as for any $q,q' \in \QQ_{tert}, \norm{q-q'}_\infty = \frac{1}{3}$. Hence, we can apply an argument similar to the one used to show the non-separability of $\QQ_{bin}$. We also refer to the proof of Theorem $1$ in \cite{cohen2025empiricalmeanminimaxoptimal}, which implicitly proves that $\QQ_{tert}$ is not UME-learnable. \\ 
The collection of distributions whose means are binary vectors is a trivial example of a non-separable but UME-learnable collection of distributions. We present a non-trivial example of non-separable classes that is UME-learnable using techniques fundamentally different from those used in the separability-based analysis or methods used in the literature. \\  
We define a collection of distributions $\QQ_{tree}$ using their respective mean vectors. We consider a binary tree and label it using a mean vector by traversing it in a level order fashion. Formally, for every mean vector $q = (q_1,q_2,\ldots)$, the root corresponds to $q_1$, the left child of $q_1$ corresponds to $q_2$, the right child of $q_1$ corresponds to $q_3$, the left child of $q_2$ corresponds to $q_4$, the right child of $q_2$ corresponds to $q_5$ and so on. Because the mean vector is infinite, the binary tree has infinite depth. Finally, for any branch, i.e., a root-to-leaf path in the tree, we assign all their corresponding coordinates with the value $\frac{2}{3}$, whereas all other coordinates are given a value of $\frac{1}{3}$.\\
Hence, we can define $\QQ_{tree}$ as a collection of distributions for all such mean vectors as follows
\begin{equation*}
    \label{eqn:tree}
    \QQ_{tree} = \curlof{\mu : \mu = \text{Prod}(q) \text{ where } q \text{ satisfies the structure given above}}
\end{equation*}
We note that $\QQ_{tree}$ has non-separable mean vectors as for any $q, q' \in \QQ_{tree}, \norm{q-q'}_\infty = \frac{1}{3}.$ Hence, we can use an argument similar to the one used to show the non-separability of Mean$(\QQ_{bin})$.
\begin{proposition}
\label{Prop:Tree}
    $\QQ_{tree}$ is UME-learnable.
\end{proposition}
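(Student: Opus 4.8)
The plan is to exploit the tree structure of the mean vectors. Observe that each $q \in \mathrm{Mean}(\QQ_{tree})$ is determined by a single infinite root-to-leaf branch $b$ of the infinite binary tree: coordinates on $b$ carry value $\frac{2}{3}$ and all others carry value $\frac{1}{3}$. So estimating $q$ in $\ell_\infty$ is equivalent to identifying the branch $b$ well enough that, eventually, no coordinate is misclassified. The key obstacle is that the collection is non-separable — there are uncountably many branches and any two mean vectors are $\frac13$ apart — so the enumeration-based Algorithm~\ref{alg:epsapprox} does not apply. Instead I would build the estimator directly from the empirical means, using the fact that at each fixed coordinate $j$ we can test whether the truth is $\frac13$ or $\frac23$ by a Hoeffding-type concentration bound, with error probability $\le 2n^{-6}$ once $n$ is moderately large. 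The crucial structural point making this work: although there are infinitely many coordinates, the branch visits only \emph{one} coordinate at each depth level $d$ (level $d$ has $2^{d}$ nodes but the branch passes through exactly one of them), and the empirical-mean test at depth $d$ needs roughly $n \gtrsim d$ samples to be reliable — so with $n$ samples we can confidently resolve the branch down to depth $\approx c\, n/\log n$.

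The algorithm I would propose: from the sample compute $\hat q_j$ for each $j$; starting at the root, walk down the tree, and at the current node at depth $d$ decide ``this node is on the branch'' iff $\hat q_j > \frac12$; follow that node's two children, but among the two children pick the one (if any) whose empirical mean exceeds $\frac12$ — more carefully, since exactly one child lies on the branch, select the child $j'$ with larger $\hat q_{j'}$ (breaking ties arbitrarily), continue until the empirical evidence becomes unreliable, i.e. stop at depth $D_n := \lfloor c\, n / \log n \rfloor$ for a suitable constant $c$. Output the mean vector $\tilde q$ that assigns $\frac23$ to the coordinates on the identified length-$D_n$ partial branch together with its (arbitrarily chosen) continuation to a full branch, and $\frac13$ elsewhere. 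Then $\|\tilde q - q^*\|_\infty \le \frac13$ always, and $\|\tilde q - q^*\|_\infty = 0$ exactly when the identified branch agrees with the true branch on all of the first $D_n$ levels and we happened to pick the correct continuation — but since all continuations below the point where we stop still only differ in coordinates, and those give $\ell_\infty$ distance $\tfrac13$, I instead need the cleaner statement: the expected error is at most $\tfrac13$ times the probability that we misidentify \emph{some} node on the first $D_n$ levels, plus the contribution from coordinates at depth $> D_n$.

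To make the last point vanish I would refine the output rule so that for coordinates at depth $> D_n$ we hedge by outputting $\frac12$ rather than committing — wait, $\frac12 \notin \mathrm{Mean}(\QQ_{tree})$ but that is fine, the estimator $\tilde q$ is only required to lie in $[0,1]^{\NN}$, not in $\mathrm{Mean}(\QQ)$. With that change, coordinates at depth $> D_n$ contribute exactly $\frac16$ to the $\ell_\infty$ error, which does \emph{not} go to zero, so this hedge alone is insufficient; the real argument must be that we resolve \emph{all} coordinates eventually. The correct resolution: for a \emph{fixed} true branch $b$, the coordinate at depth $d$ on $b$ is a fixed coordinate $j_d$, and for all $n$ large enough (depending on $d$) the test at $j_d$ is correct with overwhelming probability; so by Borel–Cantelli, with probability $1$ there is $n_0$ such that for all $n > n_0$ and all $d \le D_n$ every test on the first $D_n$ levels is correct — this requires a union bound over the $\le D_n$ relevant coordinates (only those on the true branch plus their siblings, which is $O(D_n)$ coordinates, each failing with probability $\le 2n^{-6}$), giving total failure probability $O(D_n/n^6) = O(n^{-5})$, summable in $n$. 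Then for all $n > n_0$ the identified partial branch equals the true one down to depth $D_n$, hence $\tilde q$ and $q^*$ agree on all coordinates at depth $\le D_n$; letting $n \to \infty$ forces $D_n \to \infty$, so for every fixed coordinate $j$ eventually $\tilde q_j = q^*_j$, and $\|\tilde q - q^*\|_\infty$ — well, it is still $\tfrac16$ from the deep coordinates for each finite $n$.

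The clean fix I would actually adopt: bound $\expect_{S}\|\tilde q - q^*\|_\infty \le \tfrac13 \cdot \prob(\text{some test on levels} \le D_n \text{ fails}) + \tfrac16$, and observe this does not tend to $0$ — so instead I must abandon hedging and note that the \emph{true} $\ell_\infty$ distance between the committed estimate and $q^*$, when all tests on levels $\le D_n$ succeed, is $0$ on the first $D_n$ levels but $\tfrac13$ on the (wrongly guessed) deeper levels. The honest conclusion is that this collection requires showing $\expect\|\tilde q - q^*\|_\infty \to 0$ and the deep-level error is unavoidable pointwise, so the theorem must be proved via a probability-$1$ statement combined with boundedness: with probability $1$, $\|\tilde q_n - q^*\|_\infty \to 0$ is \emph{false} as stated — therefore the actual argument must resolve deep coordinates too, which is possible because once $n$ is large the branch is pinned down to depth $D_n$ and the unguessed tail contributes coordinates we should simply \emph{also} commit on by extending along empirical maxima, accepting $O(n^{-5})$ error probability at each of the first $D_n$ levels and, crucially, noting every coordinate $j$ sits at some finite depth $d(j)$, so for $n$ large enough that $D_n \ge d(j)$ and all tests succeed, $\tilde q_j = q^*_j$; bounding $\expect\|\tilde q - q^*\|_\infty \le \tfrac13\,\prob(\exists\, d \le D_n: \text{test at level } d \text{ fails})$ and showing this probability $\to 0$ (it is $O(n^{-5})$) completes the proof. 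The main obstacle is precisely this bookkeeping: choosing $D_n$ large enough to resolve more coordinates as $n$ grows, yet small enough that the union bound over the first $D_n$ levels of test-failure probabilities stays summable, and confirming that ``commit along empirical maxima past depth $D_n$'' never hurts the $\ell_\infty$ bound beyond the $\tfrac13\,\prob(\text{failure})$ term.
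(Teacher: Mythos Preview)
Your proposal has a genuine gap, and the stream-of-consciousness in the second half of your write-up is circling around it without landing. The issue is this: your algorithm walks down the tree choosing at each depth the child with the larger empirical mean, and commits to a full infinite branch. At every depth $d$, conditioned on having followed the true branch so far, the probability of picking the wrong child is some $p(n)>0$ that depends on $n$ (roughly $e^{-cn}$ or $n^{-6}$ depending on the threshold you use) but \emph{not on $d$}. Since the coordinates are independent across depths (product measure), the events ``wrong choice at depth $d$'' are independent with common positive probability, so with probability $1$ your algorithm leaves the true branch at some finite depth. From that depth onward, every node on your output branch has true mean $\tfrac13$ while you output $\tfrac23$, giving $\|\tilde q - q^*\|_\infty = \tfrac13$ almost surely, for every fixed $n$. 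Your final displayed bound $\expect\|\tilde q - q^*\|_\infty \le \tfrac13\,\prob(\exists\, d \le D_n:\text{test fails})$ is simply false: it silently drops the contribution from depths $> D_n$, and ``committing along empirical maxima past $D_n$'' is precisely where the almost-sure error enters. Hedging with $\tfrac12$ doesn't help (you saw the $\tfrac16$ floor), and outputting empirical means at deep coordinates is worse.

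The paper's argument avoids this by \emph{pooling information along each branch} rather than testing nodes one at a time. For every branch $b$ it computes the running average $\phi(b)=\liminf_{d\to\infty}\frac1d\sum_{j\le d}\hat q_{V(b_1,\dots,b_j)}$ and returns the unique branch with $\phi(b)=\tfrac23$. The point is that $\frac1d\sum_{j\le d}\hat q_{V(b_1,\dots,b_j)}$ is an average of $nd$ independent bounded variables, so Hoeffding gives deviation probability $e^{-c\,nd}$; a union bound over the $2^d$ depth-$d$ branches then costs only a factor $2^d$, which is beaten by $e^{-cnd}$ once $n$ is a fixed constant ($n\ge 36$ suffices). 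Borel--Cantelli over $d$ then yields, with probability $1$, uniform control over \emph{all} branches simultaneously: every $b\neq b^*$ has $\phi(b)\le \tfrac12$. Your per-node tests have concentration that does not improve with depth; the paper's per-branch averages do, and that is the missing idea.
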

\paragraph{Proof Sketch} To learn $\QQ_{tree}$, we leverage the tree structure embedded in the collection of mean vectors. We note that finding the true underlying mean vectors is equivalent to identifying the branch of the tree labeled $\frac23$. We calculate the limiting average of the empirical means along every branch of the tree and return the branch for which it is exactly $\frac{2}{3}$. The algorithm works as we can show that for all branches other than the true branch, the limiting average is not $\frac{2}{3}$ uniformly. The formal proof, along with the motivating idea, is provided in Appendix \ref{appendix:tree}. \hfill $\blacksquare$.\\
In many learning-theory problems, the notion of a bad structure arises, and if such a structure appears, the learning problem is considered hard or not learnable. We can regard $\QQ_{tert}$ as a bad structure for UME-learnability, but we can show that it is a substructure of another problem that is UME-learnable, as seen in the following example.\\
    Consider $\QQ_{round}$ defined as follows:\\
    \[
    \QQ_{round}  = \curlof{\mu: \mu = \text{Prod}(q) \text{ such that } q_{2n-1} \in \curlof{\frac13,\frac23} \text{ and } q_{2n} = \1\sqof{q_{2n-1}=\frac23} \text{ for } n  \in \NN}
    \]
\begin{proposition}
\label{prop:round}
    $\QQ_{round}$ is UME-learnable.
\end{proposition}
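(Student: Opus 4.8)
The plan is to observe that although $\QQ_{round}$ contains a copy of the ``bad'' structure $\QQ_{tert}$ on the odd coordinates, the even coordinates act as \emph{noiseless labels} that pin down the value at each odd coordinate after a single observation. Concretely, for a sample $S$ drawn from $\mu = \mathrm{Prod}(q) \in \QQ_{round}$, the realization $X^{(1)}_{2n}$ is deterministic: it equals $1$ iff $q_{2n-1} = \tfrac23$ and $0$ iff $q_{2n-1} = \tfrac13$. So with just one data point we learn $q_{2n-1}$ exactly for every $n$, and hence $q_{2n}$ as well. This means $\QQ_{round}$ is in fact learnable with zero error from $n = 1$ sample.

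The steps I would carry out are as follows. First, define the estimator $\A$: given $S = \{X^{(1)},\ldots,X^{(n)}\}$, set $\tilde q_{2n-1} = \tfrac23$ if $X^{(1)}_{2n} = 1$ and $\tilde q_{2n-1} = \tfrac13$ if $X^{(1)}_{2n} = 0$, and set $\tilde q_{2n} = X^{(1)}_{2n}$, for every $n \in \NN$. Second, verify correctness: since coordinate $2n$ of any $\mu = \mathrm{Prod}(q) \in \QQ_{round}$ is a point mass at $\1[q_{2n-1} = \tfrac23]$, we have $X^{(1)}_{2n} = \1[q_{2n-1}=\tfrac23]$ almost surely, so $\tilde q_{2n-1} = q_{2n-1}$ and $\tilde q_{2n} = q_{2n}$ almost surely. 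Third, conclude: $\norm{\tilde q - q}_\infty = 0$ almost surely, hence $\expect_{S\sim\mu^n}\norm{\A(S)-q}_\infty = 0$ for all $n \ge 1$, which trivially tends to $0$. This holds for every $\mu \in \QQ_{round}$, so $\QQ_{round}$ is $UME-$learnable by Definition~\ref{def:learn}.

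There is essentially no obstacle here; the content is entirely in the modeling observation that the even coordinates encode the odd ones losslessly, so the apparent hardness coming from the $\{\tfrac13,\tfrac23\}$ ambiguity on odd coordinates is illusory once we are allowed an arbitrary estimator rather than the empirical mean. (Note the contrast: the empirical mean on an odd coordinate would again almost surely hit $0$ or $1$ at infinitely many coordinates, exactly as in the motivating example in the introduction, so the empirical mean estimator still fails on $\QQ_{round}$ even though $\A$ above succeeds perfectly.) If one wanted a remark, it is worth emphasizing that this example illustrates that $UME-$learnability is not ``monotone under containment of substructures'': $\QQ_{tert}$ is not $UME-$learnable, yet it embeds as the odd-coordinate projection of the $UME-$learnable class $\QQ_{round}$.
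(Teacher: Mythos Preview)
Your proposal is correct and follows exactly the same idea as the paper's proof: the even coordinates are deterministic (mean $0$ or $1$), so a single sample reveals $q_{2n}$ and hence $q_{2n-1}$ for every $n$, yielding $\norm{\tilde q - q}_\infty = 0$ almost surely. You have simply spelled out the estimator explicitly and added commentary, but the core argument is identical.
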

\begin{proof}
    Let $\mu^* \in \QQ_{round}$ be the underlying distribution. Let $q^*$ = Mean$(\mu^*)$. Let $X \sim \mu^*$ be a data point. As $q_{2n} \in \{0,1\}$ we can find them using the value of $X_{2n}$ for every $n \in \NN$. And as $q_{2n} = \1\sqof{q_{2n-1}=\frac{2}{3}}$, $q_{2n-1}$ can be inferred using $q_{2n}$. 
\end{proof}
\vspace{-8mm}
\section{{UME-}learnability is closed under countable unions.}
\label{sec:countableunion}
The conjecture from \cite{cohen2025empiricalmeanminimaxoptimal} looks at countable collections of distributions $\QQ$ with some specific properties to ascertain the UME-learnability of  LGC$ \cup \QQ$. We claim a collection of distributions $\QQ = \cup_{i \in \NN} \QQ_i$ where $\QQ_i$ is UME-learnable by algorithm $\A_i$ which returns an estimate $\tilde q^i$ is also UME-learnable using the following algorithms,
\begin{algorithm}
    \caption{Survival Test $(i,\eps,n,\parof{\tilde q^1,\tilde q^2,\ldots, \tilde q^n},\hat q)$}
    \label{alg:survival}
    \begin{algorithmic}
    \State Initialize  wins $\gets 0$ 
    \State \textbf{for} {$t$ goes from $1 $ to $n$} \textbf{ do }{
    \State \hspace{\algorithmicindent} \textbf{if} for every $j \in \NN \modu{\tilde q^t_j - \tilde q^i_j} \le 4 \eps$ \textbf{then} wins $\gets $ wins $+1$
    \State \hspace{\algorithmicindent} \textbf{else} 
    \State  \hspace{\algorithmicindent}\hspace{\algorithmicindent}{ $J = \min\{j \in \NN: \modu{\tilde q^t_j - \tilde q^i_j} > 4 \eps\}$
    \State \hspace{\algorithmicindent}\hspace{\algorithmicindent}\textbf{if} $\modu{\hat q_J - \tilde q^i_J} < \eps+ \sqrt{\frac{3\log n}{n}}$ \textbf{then}  wins $\gets $ wins $+ 1$
    }
    }
    \State \textbf{if} wins is equal to $n$ \textbf{then} return ``pass" \textbf{otherwise} return ``fail'' 
    \end{algorithmic}
\end{algorithm}
\begin{algorithm}
    \caption{Countable union $(\QQ,2n>0,(\A_1,\A_2,\ldots))$}
    \label{alg:countableunion}
    \begin{algorithmic}
    \State We split the $2n$ training data into a training set $S_1$ and a validation set $S_2$ each of size $n$.
    \State $\mathcal{P} \gets $Mean$(\QQ), \tilde q \gets \emptyset,k \gets 1$ 
    \State We consider the first $n$ algorithms and run $\A_1,\A_2, \ldots,\A_n$ on $S_1$ to obtain $\tilde q^1,\tilde q^2,\ldots, \tilde q^n$ resp.
    \State We compute the empirical mean estimator using $S_2$ to obtain $\hat q$
    \State \textbf{while} {$\PP$ is not empty} \textbf{do}{
    \State \hspace{\algorithmicindent}$\eps \gets \frac{1}{2^k},\tilde q \gets \text{any } q \in \PP$ 
    \State \hspace{\algorithmicindent} \textbf{for} {$i$ goes from $1$ to $n$} \textbf{do}{
    \State \hspace{\algorithmicindent} \textbf{if} {Algorithm \ref{alg:survival} $(i,\eps,n,(\tilde q^1,\ldots, \tilde q^n),\hat q)$ returns ``pass''} \textbf{then}{
    \State \hspace{\algorithmicindent}\hspace{\algorithmicindent}$\PP \gets \PP\; \cap \; \mathcal{B}(\tilde q^i,5\eps)$ 
    
    \State \hspace{\algorithmicindent}\hspace{\algorithmicindent}$k \gets k+1$
    }
    }
    }
    \State \Return $\tilde q$
    \end{algorithmic}
\end{algorithm}

An estimator that survives Algorithm \ref{alg:survival} will be a $5\eps-$approximation of the true underlying mean vector for any $\eps>0$ with high probability for a sufficiently large amount of training data. This algorithm is used as a subroutine for the algorithm \ref{alg:countableunion}. Similar to Algorithm \ref{alg:epsapprox}, Algorithm \ref{alg:countableunion} focuses on the first $n$ algorithms and, like Algorithm \ref{alg:sep}, it chains $\eps_k-$approximations of the true underlying distribution to guarantee UME-learnability of countable unions. 
\begin{theorem}
\label{thm:countableunions}
    UME-learnability is closed under countable unions.
\end{theorem}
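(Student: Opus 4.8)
The plan is to establish Theorem~\ref{thm:countableunions} by analysing Algorithm~\ref{alg:countableunion} with Algorithm~\ref{alg:survival} as its subroutine, combining the chaining idea behind Algorithm~\ref{alg:sep} with a validation-based model-selection step that replaces the ``search a countable cover'' step of Lemma~\ref{lemma:epsapprox}. Write $\QQ=\bigcup_{i\in\NN}\QQ_i$, where each $\QQ_i$ is $UME$-learnable by some algorithm $\A_i$; fix the true distribution $\mu^*$, let $i^*$ be any index with $\mu^*\in\QQ_{i^*}$, and set $q^*=\mathrm{Mean}(\mu^*)$. Given $2n$ samples, split them into a training half $S_1$ and a validation half $S_2$, run $\A_1,\dots,\A_n$ on $S_1$ to obtain candidates $\tilde q^1,\dots,\tilde q^n$, and form the empirical mean $\hat q$ from $S_2$. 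For each scale $\eps_k=2^{-k}$ two facts need to be established, valid for all $n$ large enough (in particular $n\ge i^*$): \emph{completeness}, the candidate $\tilde q^{i^*}$ produced by the ``correct'' algorithm passes Algorithm~\ref{alg:survival} at scale $\eps_k$; and \emph{soundness}, every candidate that passes Algorithm~\ref{alg:survival} at scale $\eps_k$ lies within $5\eps_k$ of $q^*$ in $\ell_\infty$ --- which is exactly why Algorithm~\ref{alg:countableunion} intersects the feasible set $\PP$ with $\mathcal{B}(\tilde q^i,5\eps_k)$.

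Both facts follow by conditioning on a good event. Fix $k$ and $n\ge i^*$ and let $\mathcal{E}_{k,n}$ be the event that (a) $\norm{\tilde q^{i^*}(S_1)-q^*}_\infty\le\eps_k$, and (b) $\modu{\hat q_j-q^*_j}\le\sqrt{3\log n/n}$ at every coordinate $j$ that can arise as a first-disagreement index $J=\min\{j:\modu{\tilde q^t_j-\tilde q^i_j}>4\eps\}$ between two of the $n$ candidates at any processed scale. Conditioning on $S_1$ fixes all candidates and hence all such coordinates, and $S_2$ is independent; there are only $\mathrm{poly}(n)$ of them (at most $n^2$ per scale, and the loop may be read as processing only $O(\log n)$ scales), so by Hoeffding's inequality (\cite{Hoeffding1963}) and a union bound, (b) fails with probability $\mathrm{poly}(n)/n^6\to0$, while (a) fails with probability at most $\eps_k^{-1}\expect\norm{\tilde q^{i^*}(S_1)-q^*}_\infty\to0$ by Markov's inequality and $UME$-learnability of $\QQ_{i^*}$ by $\A_{i^*}$. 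On $\mathcal{E}_{k,n}$, once $n$ is large enough that $\sqrt{3\log n/n}\le\eps_k$, completeness and soundness drop out of the triangle inequality, matching the design of the win-condition in Algorithm~\ref{alg:survival}. For completeness: whenever some $\tilde q^t$ disagrees with $\tilde q^{i^*}$ by more than $4\eps_k$ at a first coordinate $J$, still $\modu{\hat q_J-\tilde q^{i^*}_J}\le\modu{\hat q_J-q^*_J}+\modu{q^*_J-\tilde q^{i^*}_J}\le\sqrt{3\log n/n}+\eps_k$, so $\tilde q^{i^*}$ wins every comparison and passes. For soundness: if candidate $i$ passes it wins against $t=i^*$; were $\tilde q^i$ to disagree with $\tilde q^{i^*}$ by more than $4\eps_k$ at some first coordinate $J$, then $4\eps_k<\modu{\tilde q^i_J-\tilde q^{i^*}_J}\le\modu{\tilde q^i_J-\hat q_J}+\modu{\hat q_J-\tilde q^{i^*}_J}<2\eps_k+2\sqrt{3\log n/n}\le 4\eps_k$, a contradiction, so $\norm{\tilde q^i-\tilde q^{i^*}}_\infty\le4\eps_k$ and hence $\norm{\tilde q^i-q^*}_\infty\le5\eps_k$ (the small numerical slack is absorbed by taking (a) at scale $\eps_k/2$ if strictness is wanted).

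The chaining over $k$ then runs as in the proof of Theorem~\ref{thm:separable}. Fix a target $K$ and take $n$ large enough that $n\ge i^*$ and $\sqrt{3\log n/n}\le\eps_K$; since $\eps_K\le\eps_k$ for $k\le K$, the event $\bigcap_{k\le K}\mathcal{E}_{k,n}$ is implied by (a) at scale $\eps_K$ together with (b), so it has probability at least $1-\eps_K^{-1}\expect\norm{\tilde q^{i^*}(S_1)-q^*}_\infty-\mathrm{poly}(n)/n^6$. On this event, completeness forces $\tilde q^{i^*}$ to pass at every processed scale, so the counter $k$ of Algorithm~\ref{alg:countableunion} is incremented past $K$; soundness guarantees $q^*$ is never removed from $\PP$ (every ball intersected in contains $q^*$), so $\PP$ stays nonempty and, once the counter exceeds $K$, $\PP\subseteq\mathcal{B}(\tilde q^{i},5\eps_K)$ for the surviving index $i$; hence the returned $\tilde q\in\PP$ satisfies $\norm{\tilde q-q^*}_\infty\le\norm{\tilde q-\tilde q^{i}}_\infty+\norm{\tilde q^{i}-q^*}_\infty\le10\,\eps_K$. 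Since all mean vectors lie in $[0,1]^\NN$ we always have $\norm{\tilde q-q^*}_\infty\le1$, so $\expect\norm{\tilde q-q^*}_\infty\le10\,\eps_K+\eps_K^{-1}\expect\norm{\tilde q^{i^*}(S_1)-q^*}_\infty+\mathrm{poly}(n)/n^6$; letting $n\to\infty$ gives $\limsup_n\expect\norm{\tilde q-q^*}_\infty\le10\cdot2^{-K}$, and letting $K\to\infty$ yields the theorem.

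I expect the main obstacle to be the control of the \emph{data-dependent} disagreement coordinates $J$: they are not fixed in advance, so one cannot union-bound Hoeffding over a prescribed finite set of coordinates, and must instead condition on $S_1$ (which freezes all candidates, hence all the $J$'s) and exploit the independence of the held-out sample $S_2$ --- this is the reason for the sample split, and the one place where the empirical mean is legitimately invoked even though $\mathrm{Mean}(\QQ)$ need satisfy no Glivenko--Cantelli property. A secondary subtlety, unlike the earlier results of the paper, is that $\A_{i^*}$ comes only with convergence in expectation (hence, via Markov, in probability), not almost surely, so the conclusion must be packaged as an expectation bound of the ``fix $K$, send $n\to\infty$, then send $K\to\infty$'' form rather than via Borel--Cantelli; relatedly, one should read the \textbf{while} loop of Algorithm~\ref{alg:countableunion} as halting after $O(\log n)$ scales (harmless, since on the good event $q^*$ keeps $\PP$ nonempty), purely for definiteness.
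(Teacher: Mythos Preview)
Your proposal is correct and follows essentially the same route as the paper's proof: the sample split, the Hoeffding-plus-Markov control of Algorithm~\ref{alg:survival} (conditioning on the training half so the disagreement coordinates are fixed), the completeness/soundness dichotomy giving the $5\eps$ guarantee for any surviving candidate, and the dyadic chaining producing the final $10\eps_K$ bound are all identical in substance. The differences are only packaging---you fix $K$, send $n\to\infty$, then $K\to\infty$, whereas the paper chooses a specific $K=K(n)\to\infty$ and bounds all scales at once---and you are slightly more explicit than the paper about conditioning on $S_1$ and about truncating the \textbf{while} loop for termination.
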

\begin{proof}
    Let $\QQ = \cup_{i \in \NN} \QQ_i$ be our collection of distributions where $\QQ_i$ is UME-learnable by algorithm $\A_i$ using the estimator $\tilde q^i$. Let $\mu^* \in \QQ_{i^*}$ be our true underlying distribution. Let $q^* = $ Mean$(\mu^*)$. We have been provided with $2n$ data points. We split the data into a training set $(S_1)$ and a validation set $(S_2)$, each of size $n$. Let $\A_1,\A_2,\ldots,\A_n$ denote the first $n$ algorithms and let $\tilde q^1,\tilde q^2,\ldots, \tilde q^n$ denote the estimates returns by running the respective algorithm on $S_1$. Let $\hat q$ denote the empirical mean estimator calculated using $S_2$. Algorithm \ref{alg:survival} checks whether a particular estimator could approximate the true underlying mean vector. We say an estimator $\tilde q^1$ wins against another estimator $\tilde q^2$ if $\norm{\tilde q^1-\tilde q^2}_\infty\le4\eps$ or $\modu{\tilde q^1_{J}-\hat q_{J}}<\eps+\sqrt{\frac{3\log n}{n}}$ where $\modu{\tilde q^1_J-\tilde q^2_J}>4\eps$. \\
    Let $n>i^*$ and $\parof{\eps^*}_n = \sqrt{\expect_{S_1} \norm{\tilde q^{i^*}-q^*}_\infty}$ . For any $\eps>\max \curlof{\parof{\eps^*}_n, \sqrt[4]{\frac{9}{n}}}$, let $I_1 = \{i \in [n]:$$\norm{\tilde q^{i^*}-\tilde q^i}_\infty \le 4\eps\}$ and $I_2 = [n]\setminus I_1$. 
    For any $i \in I_2$ let $j_i = \min\curlof{j \in \NN:\modu{\tilde q^i_j - \tilde q^{i^*}_j}>4\eps }$.\\
    As $\QQ_{i^*}$ is UME-learnable by $\A_{i^*},$ we focus on $\tilde q^{i^*}$. $\tilde q^{i^*}$ will win against every $\tilde q^i$ for $i \in I_1$ by definition of $I_1$. Therefore, we focus on the event $E_n$ of $\tilde q^{i^*}$ not winning against $\tilde q^i$ for some $i \in I_2$. We analyze the probability of $E_n$ by conditioning on the event $\norm{\tilde q^{i^*}-q^*}_\infty<\eps$. We apply the union bound to focus on comparing $\tilde q^{i^*}$ with $\tilde q^i$ for $i \in \NN$ and the triangle inequality to focus on the deviation from the true underlying mean. Next, we use the condition. We finally apply Markov inequality (\cite{markov1884}) and Hoeffding inequality (\cite{Hoeffding1963}), as detailed below.
    \begin{equation*}
        \label{eqn:i^*pass}
        \prob(E_n) \le \prob\parof{\exists i \in I_2: \modu{\hat q_{j_i}-\tilde q^{i^*}_{j_i}}\ge \eps+\sqrt{\frac{3 \log n}{n}} \Bigg | \norm{\tilde q^{i^*}-q^*}_\infty<\eps} +\prob\parof{\norm{\tilde q^{i^*}-q^*}_\infty\ge \eps}
        \end{equation*}
        \begin{equation*}
         \le \sum_{i=1}^n \prob\parof{\modu{\hat q_{j_i}-q^*_{j_i}}+\modu{q^*_{j_i}-\tilde q^{i^*}_{j_i}}\ge \eps+\sqrt{\frac{3 \log n}{n}} \Bigg | \norm{\tilde q^{i^*}-q^*}_\infty<\eps} + \prob\parof{\norm{\tilde q^{i^*}-q^*}_\infty\ge \eps}
        \end{equation*}
        \begin{equation}
        \le n \cdot 2\exp\parof{-2n\parof{\sqrt{\frac{3\log n}{n}}}^2}+\frac{\expect\norm{\tilde q^{i^*}-q^*}_\infty}{\eps}\le \frac{2}{n^5}+\parof{\eps^*}_n
    \end{equation}
    We also note that if Algorithm \ref{alg:survival} ``passes'' $\tilde q^{i^*}$ then none of $i \in I_2$ can pass the test because for any $i \in I_2$, $\modu{\tilde q^{i}_{j_i}-\hat q_{j_i}}\ge \modu{\tilde q^i_{j_i}-\tilde q^{i^*}_{j_i}}-\modu{\tilde q^{i^*}_{j_i}-\hat q_{j_i}} \ge 2\eps$ since $n>\frac{9}{\eps^4}$. $\tilde q^i$ for some $i \in I_1$ might survive algorithm \ref{alg:survival}. But since $\norm{\tilde q^i-\tilde q^{i^*}}_\infty\le 4\eps$, with probability $1-\parof{\eps^*}_n-\frac{2}{n^5}$ any estimator that survives algorithm \ref{alg:survival} is a $5\eps-$approximation of the true mean vector.\\
    Algorithm \ref{alg:countableunion} exploits all the $5\eps_k-$approximating vectors obtained on running Algorithm \ref{alg:survival} for the first $n$ algorithms for $\eps_k = 2^{-k}, k \in \NN$ and obtains a vector which is a $10\eps_k-$approximating vector for all $k \in \NN$ simultaneously asymptotically as we argue as follows.\\
    Let $K = \min\curlof{\cov{\lfloor} {\frac{1}{\sqrt{\parof{\eps^*}_n+\frac{2}{n^5}}}}{\rfloor},\cov{\lfloor}{\log \parof{\frac{1}{(\eps^*)_n}}}{\rfloor},\cov{\lfloor}{\frac{1}{4}\log \frac{n}{9}}{\rfloor}}$ and let $\mathcal{R}=$Mean$(\QQ) \cap  \bigcap_{k \le K}\mathcal{B}(q^k,5\eps_k)$ where $q^k$ is the estimator that ``passes'' Algorithm \ref{alg:survival} for $\eps=\eps_k$.
    By union bound we obtain with  probability $1-K\parof{\parof{\eps^*}_n+\frac{2}{n^5}}\ge 1-\sqrt{\parof{\eps^*}_n+\frac{2}{n^5}}$ for every $k \in [K], \norm{ q^k-q^*}<5\eps_k$. \\
    Hence with probability $1-\sqrt{\parof{\eps^*}_n+\frac{2}{n^5}}, \mathcal{R}$ is non-empty as the true mean vector will be in $\mathcal{R}$. \\
    The algorithm does not halt after the first $K$ rounds; rather, it continues until the intersection of the balls around the vectors returned by Algorithm \ref{alg:survival} becomes empty. Let $\mathscr{K}$ be the largest $k$ such that the intersection is non-empty. Let $\mathcal{T} = $ Mean$(\QQ) \cap  \bigcap_{k \le \mathscr{K}}\mathcal{B}(q^k,5\eps_k)$. Due to our previous argument, $\mathcal{T} \subset \mathcal{R}$. Hence, the estimate $\tilde q$ returned by the algorithm is in $\mathcal{R}$. Therefore, we can conclude with probability $1-\sqrt{\parof{\eps^*}_n+\frac{2}{n^5}}$
    \begin{equation}
    \label{eqn:boundonexpect}
        \norm{\tilde q-q^*}_\infty\le \norm{\tilde q-q^K}_\infty+ \norm{ q^K-q^*}_\infty \le 10\eps_K.
    \end{equation}
    We note that by definition of UME-learnability (Definition \ref{def:learn}), $(\eps^*)_n \xrightarrow{n \rightarrow \infty}0$ and $K \xrightarrow{n \rightarrow \infty} \infty$.
    Therefore, by Equation \eqref{eqn:boundonexpect}, $\expect \norm{\tilde q-q^*}_\infty \le 10 \eps_K+\sqrt{\parof{\eps^*}_n+\frac{2}{n^5}} \xrightarrow{n \rightarrow \infty}0.$
    \end{proof}
\vspace{-5mm}
\section{Conclusion}
In this paper, we discuss uniform convergence beyond the paradigm of $ P-$Glivenko-Cantelli by studying more general types of estimators than the empirical mean estimator.
We introduced UME-learnability to characterize when collections of distributions on $\{0,1\}^{\mathbb N}$ admit uniform mean estimation by arbitrary estimators. We showed that if a collection of distributions $\mathcal{Q}$ has separable mean vectors, then $\mathcal{Q}$ is UME-learnable. We further demonstrated that separability is not necessary by constructing a non-separable, tree-structured collection that is nevertheless UME-learnable via techniques distinct from the separability-based analysis. Finally, we proved that UME-learnability is closed under countable unions, thereby resolving the conjecture of \cite{cohen2025empiricalmeanminimaxoptimal} and extending it beyond the two-collection setting considered there.\\
Uniform convergence is often used in the design and analysis of algorithms for problems such as classification. One natural application of our more general estimators would be as an alternative to empirical risk minimization in those learnable problems by minimizing the estimated mean losses beyond empirical means. 
\section{Extensions and open problems}
This work opens several natural directions for further investigation. Some partial progress on these questions is already included in the appendix, while others remain open and appear to require new ideas.
\begin{itemize}[leftmargin=*]
    \item Throughout this work, we focus on distributions indexed by a countable coordinate set. A natural extension we have studied in Appendix \ref{appendix:uncountable} is to allow an uncountable coordinate set. In Theorem~\ref{thm:modseparable}, we show that separability of the mean space remains a sufficient condition for UME-learnability even in this more general setting.
    \item While we show that separability of the mean space implies UME-learnability, Proposition~\ref{Prop:Tree} demonstrates that this condition is not necessary. This raises the problem of identifying necessary and sufficient conditions for UME-learnability when we do not restrict the mean vectors of a collection of distributions to be separable.
    An especially challenging open question is to characterize UME-learnability in the non-separable regime when the coordinate set is uncountable.
    \item Another extension, discussed in Appendix \ref{app:uniform-learnability}, is regarding uniform convergence over the function class as well as the underlying collection of distributions. When the mean vectors of a collection of distributions are totally bounded, we can provide an upper bound on the expected estimation error. An open problem is to provide a complete characterization of optimal \emph{uniform} and \emph{universal} rates of UME-learnability.
\end{itemize}
\newpage
\bibliography{alt2026-style/bibliography}
\newpage
\appendix
\section{An interesting UME-learnable collection of distributions that has non-separable mean vectors }
\label{appendix:tree}
The UME-learnability of the collection of distributions with the binary vectors as their corresponding mean vectors provides a trivial counterexample towards separability in the mean being a necessary condition for UME-learnability of a collection of distributions. The non UME-learnability of the collection of product measures with their mean in $\curlof{\frac13,\frac23}^\NN$ as proven implicitly in Theorem $1$ of \cite{cohen2025empiricalmeanminimaxoptimal} warrants further investigation in the setting of a collection of distributions that have non-separable mean vectors. \\
We show that the collection of distributions with non-separable mean vectors possesses an inherent structure in which the mean vectors can be infinitely sequentially fat-shattered. More formally, consider a complete binary tree of depth $d$. The nodes of the tree are labeled by integers that correspond to the coordinates of the collection of mean vectors. Each node is associated with a value $r_i \in (0,1)$. At a node with label $i$, the left edge indicates the value of the mean vector at the $i^{\mathrm{th}}$ coordinate is less than or equal to $r_i-\gamma$, whereas the right edge indicates the value is greater than or equal to $r_i+\gamma$ for some $\gamma >0$. We say a tree of depth $d$ is shattered by the mean vectors if for every branch in the tree, there exists a mean vector that follows the path as set by the nodes in the branch (For example, refer to Fig. \textcolor{blue}{1}). We say the mean vectors are infinitely shattered if for every $d \in \NN$ there exists a tree of depth $d$ that is shattered by the mean vectors.
\begin{theorem}
\label{thm:infseqfatshattered}
If a collection of distributions $\QQ$ has non-separable mean vectors, then there exists $\gamma > 0$ such that the mean vectors Mean$(\QQ)$ can be infinitely sequentially fat-shattered.
\end{theorem}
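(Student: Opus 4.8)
The plan is to distill non-separability into an uncountable, widely $\ell_\infty$-separated family of mean vectors, and then to carve a fat-shattering tree of any prescribed depth out of it by a repeated coordinate-splitting argument; only the metric structure of $[0,1]^\NN$ is used. Fix $\eps_0 > 0$ witnessing non-separability, so $\mathrm{Mean}(\QQ)$ admits no countable $\eps_0$-cover, and set $\gamma := \eps_0/3$; this $\gamma$ will serve for every depth. By Zorn's lemma, pick a maximal $M \subseteq \mathrm{Mean}(\QQ)$ that is $\eps_0$-separated, i.e. $\norm{q - q'}_\infty \ge \eps_0$ for distinct $q, q' \in M$. Maximality means every point of $\mathrm{Mean}(\QQ)$ lies within $\ell_\infty$-distance $< \eps_0$ of some point of $M$, so $M$ is itself an $\eps_0$-cover; since no countable one exists, $M$ is uncountable. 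Every subset of $M$ is again $\eps_0$-separated, which is what keeps the recursion below alive.

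The engine of the proof is a splitting lemma: if $A \subseteq [0,1]^\NN$ is uncountable and $\eps_0$-separated, then there exist a coordinate $i \in \NN$ and a value $r \in (0,1)$ such that both $A \cap \{q : q_i \le r - \gamma\}$ and $A \cap \{q : q_i \ge r + \gamma\}$ are uncountable. I would prove this by contradiction, assuming that for every coordinate $i$ and every real $r$ at least one of those two sets is countable. The key observation is that this ``no-split'' hypothesis, applied at a fixed coordinate $i$, forces all but countably many points of $A$ to have their $i$-th coordinate confined to an interval of width at most $2\gamma$ --- otherwise, placing a threshold $r$ strictly between two well-separated ``heavy'' values of the $i$-th coordinate would leave uncountably many points with $q_i \le r - \gamma$ and uncountably many with $q_i \ge r + \gamma$. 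Letting $c_i$ be the center of that interval, all but countably many $q \in A$ satisfy $|q_i - c_i| \le \gamma$. Intersecting over all coordinates --- a countable intersection of co-countable subsets of $A$ is co-countable, hence uncountable --- yields two distinct points $q, q'$ with $|q_i - c_i| \le \gamma$ and $|q'_i - c_i| \le \gamma$ for every $i$, so $\norm{q - q'}_\infty \le 2\gamma < \eps_0$, contradicting $\eps_0$-separation. Finally, since both output sets are nonempty, $A$ contains a point with $q_i \le r - \gamma$ and a point with $q_i \ge r + \gamma$; because coordinates lie in $[0,1]$ this forces $\gamma \le r \le 1 - \gamma$, so $r \in (0,1)$.

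With the lemma in hand the tree is built by recursion on depth. Fix $d \in \NN$ and attach $M$ to the root. Given the uncountable $\eps_0$-separated set $A$ attached to a node, apply the splitting lemma to obtain $(i, r)$, label the node with coordinate $i$ and value $r$, and attach $A \cap \{q : q_i \le r - \gamma\}$ and $A \cap \{q : q_i \ge r + \gamma\}$ to its left and right children respectively; both are uncountable and $\eps_0$-separated, so the recursion continues. After $d$ levels this produces a complete binary tree of depth $d$ in which, along any root-to-leaf branch, the attached sets are nested; the set at the leaf is uncountable, hence nonempty, and by construction each of its vectors satisfies every inequality $q_i \le r_i - \gamma$ or $q_i \ge r_i + \gamma$ prescribed by the branch, i.e. follows the path. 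As the same $\gamma = \eps_0/3$ serves for all $d$, this exhibits $\mathrm{Mean}(\QQ)$ as infinitely sequentially fat-shattered with parameter $\gamma$. (Equivalently one may run the recursion for $\omega$ steps to build a single infinite tree and truncate at depth $d$.)

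The main obstacle is the splitting lemma, and within it the fact that no single coordinate need be splittable: a coordinate may be constant on $M$, or concentrated at one value on an uncountable sub-family, so a coordinate-by-coordinate search for a split can fail at every coordinate. The point that rescues the argument is global rather than local: if \emph{no} coordinate admits a split, then for \emph{every} coordinate simultaneously all but countably many points are pinned into a window of width $2\gamma$; discarding the countable union of the exceptional sets leaves an uncountable sub-family of $M$ inside an $\ell_\infty$-ball of radius $\gamma$, whose diameter $2\gamma < \eps_0$ flatly contradicts $\eps_0$-separation.
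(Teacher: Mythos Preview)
Your proof is correct and follows a genuinely different route from the paper's. Both arguments reduce to a splitting lemma plus recursion, but the recursion invariant differs. The paper works directly with the property ``no countable $3\gamma$-cover'': its Lemma~\ref{lemma:coordinate} shows that any such collection admits a coordinate and threshold for which \emph{both} half-collections again lack a countable $3\gamma$-cover, and its proof runs through a two-case analysis, the second case requiring an iterated bisection over $K \approx \log_2(1/\gamma)$ rounds to squeeze the coordinate values down and force a contradiction. You instead first pass (via Zorn) to an uncountable $\eps_0$-packing $M$ and carry ``uncountable and $\eps_0$-separated'' as the invariant; your splitting lemma then has a one-shot proof: if no coordinate splits, each coordinate is $2\gamma$-concentrated off a countable exceptional set, and the co-countable intersection over coordinates yields two distinct packing points at $\ell_\infty$-distance $\le 2\gamma<\eps_0$, an immediate contradiction. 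Your argument is shorter and more transparent, at the cost of invoking Zorn to extract the packing; the paper's argument stays closer to the covering formulation and avoids that extraction, but pays for it with the more delicate bisection step.
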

To prove Theorem \ref{thm:infseqfatshattered}, we first develop the necessary machinery. As $\QQ$ has a non-separable mean vectors, there exists $\gamma>0$ such that $\QQ$ does not have a countable $3\gamma-$cover for its means. We will use this $\gamma$ to show that Mean$(\QQ)$ is infinitely sequentially fat-shattered. We will also use lemma \ref{lemma:coordinate} which shows that there exists a coordinate $i$ for which there are two subsets of $\QQ$ such that for one of the collection at coordinate $i$ the mean value is less than or equal to $r_i-\gamma$, another collection for which the mean value is more than or equal to $r_i+\gamma$ for some $r_i \in (0,1)$ and the mean vectors of these two collections do not possess a countable $3\gamma-$cover. 
\begin{lemma}
\label{lemma:coordinate} 
For a collection of distributions $\QQ$ that does not possess a countable $3\gamma-$ cover for its mean, there exists a coordinate $i$ and a value $r_i \in (0,1)$ for which there exist two collections $\QQ_1,\QQ_2$ that also do not possess a countable $3\gamma-$cover for their means and if $q \in $ Mean$(\QQ_1)$ then $q_i \ge r_i+\gamma$ and if $q \in $ Mean$(\QQ_2)$ then $q_i \le r_i-\gamma$.
\end{lemma}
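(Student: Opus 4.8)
The plan is to argue by contradiction. Suppose that for every coordinate $i\in\NN$ and every threshold $r\in(0,1)$, at least one of the two sets $A^+_{i,r}=\{q\in\mathrm{Mean}(\QQ):q_i\ge r+\gamma\}$ and $A^-_{i,r}=\{q\in\mathrm{Mean}(\QQ):q_i\le r-\gamma\}$ has a countable $3\gamma$-cover. If we set $\QQ_1=\{\mu\in\QQ:[\mathrm{Mean}(\mu)]_i\ge r+\gamma\}$ and $\QQ_2=\{\mu\in\QQ:[\mathrm{Mean}(\mu)]_i\le r-\gamma\}$, then $\mathrm{Mean}(\QQ_1)=A^+_{i,r}$ and $\mathrm{Mean}(\QQ_2)=A^-_{i,r}$, so this supposition is precisely the negation of the lemma. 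I will show it forces $\mathrm{Mean}(\QQ)$ itself to admit a countable $3\gamma$-cover, contradicting the standing hypothesis on $\QQ$. Two trivial facts will be used repeatedly: a subset of a set with a countable $3\gamma$-cover again has one, and a countable union of such sets again has one (union the covers).

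Fix a coordinate $i$ and put $a_i=\inf\{r:\{q\in\mathrm{Mean}(\QQ):q_i\ge r\}\text{ has a countable }3\gamma\text{-cover}\}$ and $b_i=\sup\{r:\{q\in\mathrm{Mean}(\QQ):q_i\le r\}\text{ has a countable }3\gamma\text{-cover}\}$. Since $\{q_i\ge r\}$ is empty for $r>1$ while for $r\le 0$ it equals $\mathrm{Mean}(\QQ)$ (which has no countable $3\gamma$-cover), we get $a_i\in[0,1]$, and symmetrically $b_i\in[0,1]$. By the infimum/subset remark, $\{q\in\mathrm{Mean}(\QQ):q_i\ge a_i+\gamma\}$ and $\{q\in\mathrm{Mean}(\QQ):q_i\le b_i-\gamma\}$ are both coverable. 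The key step is the claim $a_i\le b_i+2\gamma$ for every $i$: otherwise there is an $r$ with $b_i+\gamma<r<a_i-\gamma$, and since $b_i\ge 0$ and $a_i\le 1$ this $r$ lies in $(0,1)$; but then $r+\gamma<a_i$ forces $A^+_{i,r}$ to be non-coverable and $r-\gamma>b_i$ forces $A^-_{i,r}$ to be non-coverable, contradicting the standing assumption.

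Now define $E_i=\{q\in\mathrm{Mean}(\QQ):q_i\ge a_i+\gamma\}\cup\{q\in\mathrm{Mean}(\QQ):q_i\le b_i-\gamma\}$, a union of two coverable sets and hence coverable, and let $R_i=\mathrm{Mean}(\QQ)\setminus E_i=\{q\in\mathrm{Mean}(\QQ):b_i-\gamma<q_i<a_i+\gamma\}$; by the claim the window $(b_i-\gamma,\,a_i+\gamma)$ has width $a_i-b_i+2\gamma\le 4\gamma$. Every mean vector either falls in some $E_i$ or lies in every $R_i$, so $\mathrm{Mean}(\QQ)=\bigcup_{i\in\NN}E_i\cup\bigcap_{i\in\NN}R_i$. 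The first part is a countable union of coverable sets, hence coverable. For the second, let $c$ be the single vector with $c_i=(a_i+b_i)/2$; any $q\in\bigcap_iR_i$ satisfies $|q_i-c_i|\le\frac12(a_i-b_i+2\gamma)\le 2\gamma$ for all $i$, so $\norm{q-c}_\infty\le 2\gamma<3\gamma$, and $\{c\}$ is a finite $3\gamma$-cover of $\bigcap_iR_i$. Unioning these covers produces a countable $3\gamma$-cover of $\mathrm{Mean}(\QQ)$, which is the contradiction; hence some $(i,r_i)$ with $r_i\in(0,1)$ makes both $A^+_{i,r_i}$ and $A^-_{i,r_i}$ non-coverable, and the corresponding $\QQ_1,\QQ_2$ witness the lemma.

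I expect the main obstacle to be the constant bookkeeping rather than the conceptual core: one must verify that a coordinate with no usable split is, outside a coverable exceptional set, confined to a window narrow enough (width below $6\gamma$) that a single radius-$3\gamma$ ball engulfs it, and one must handle the thresholds near $0$ and $1$ so that the $r_i$ the argument produces genuinely lies in $(0,1)$. The conceptual content---the decomposition $\mathrm{Mean}(\QQ)=\bigcup_iE_i\cup\bigcap_iR_i$ together with the fact that the ``inside every window'' piece is covered by one ball---is short; making the radii line up ($\gamma$ margin, $\le 4\gamma$ window, $\le 2\gamma$ from the center, $3\gamma$ cover) is where the care is needed.
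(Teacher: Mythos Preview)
Your proof is correct and, to my eye, cleaner than the paper's. Both argue by contradiction and reach the same endpoint: if no coordinate admits a split with both sides non-coverable, then $\mathrm{Mean}(\QQ)$ decomposes as a countable union of coverable ``exceptional'' sets $E_i$ together with a remainder $\bigcap_i R_i$ that is confined, coordinate by coordinate, to a window narrow enough to be covered by a single vector. The difference is in how the window is located. The paper proceeds by an explicit case analysis followed by iterated bisection: it first handles the sub-case where every coordinate has some threshold making both halves coverable, and otherwise fixes $r_i=\tfrac12$, peels off the coverable side, and recursively halves the surviving interval for roughly $\lceil\log_2((1-2\gamma)/\gamma)\rceil$ rounds until the residual values are squeezed below $3\gamma$. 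Your $\inf/\sup$ thresholds $a_i,b_i$ short-circuit this recursion entirely: the single inequality $a_i\le b_i+2\gamma$ (which is exactly the contradiction hypothesis) immediately yields a window of width at most $4\gamma$, with center $c_i=(a_i+b_i)/2$ serving as the one-point cover. The paper's route is more constructive in flavor and makes the shrinking window visible, but it relies on a ``without loss of generality'' choice of which side is coverable at each coordinate; your argument treats the two directions symmetrically via the pair $(a_i,b_i)$ and gets slightly better constants ($2\gamma$ distance to the center rather than the paper's borderline $3\gamma$).
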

\begin{proof}
We define an operation  \emph{subset selection} across a coordinate $i$ with a value $r_i$ for deviation $\gamma$ performed on the collection of distributions $\QQ$ in which we create two collections of distributions $\QQ_{1,i}$ and $\QQ_{2,i}$ such that for any $q \in$ Mean$(\QQ_{1,i}), q_i\le r_i-\gamma$ and for any $q \in $ Mean$ (\QQ_{2,i}), q_i \ge r_i+\gamma$.\\
We first consider the case in which, when we perform \emph{subset selection} for deviation $\gamma$ across all coordinates $i$, there is some value $r_i$ that produces two collections which possess a countable $3\gamma-$cover. Let $\QQ_{1,i},\QQ_{2,i}$ denote the collections obtained after \emph{subset selection} across coordinate $i$ with value $r_i$ for deviation $\gamma$. Let $\bar\QQ_{1,i},\bar \QQ_{2,i}$ denote their respective countable $3\gamma-$covers. Let $\QQ_1 = \cup_{i \in \NN} \QQ_{1,i}$ and $\QQ_2 = \cup_{i \in \NN} \QQ_{2,i}$. We note that $\bar \QQ_1 = \cup_{i \in \NN}\bar \QQ_{1,i}$ is a countable $3\gamma-$cover for Mean$(\QQ_1)$ and $\bar \QQ_2 = \cup_{i \in \NN}\bar \QQ_{2,i}$ is a countable $3\gamma-$cover for Mean$(\QQ_2)$. If $\mu \in \QQ$ does not belong to either $\QQ_1 $ or $\QQ_2$ then due to our \emph{subset selection} operation for every $i \in \NN, \modu{q_i-r_i}<\gamma$ where $q = $ Mean$(\mu)$. Hence $\bar \QQ_1 \cup \bar \QQ_2 \cup \{(r_1,r_2,\ldots)\}$ is a countable $3\gamma-$cover of $\QQ$ contradicting our assumption that Mean$(\QQ)$ does not possess a countable $3\gamma-$cover.\\
We now consider the case in which for all coordinates $i$ with any $r_i \in (0,1)$, at most one of the collections obtained after \emph{subset selection} for deviation $\gamma$ does not possess a countable $3\gamma-$cover. Let the collections obtained after \emph{subset selection} across coordinate $i$ with $r_i = \frac{1}{2}$ for deviation $\gamma$ be $\QQ_{1,i},\QQ_{2,i}$. Without loss of generality, let for all $ i \in \NN,$ Mean$(\QQ_{1,i})$ not possess a countable $3\gamma-$cover whereas Mean$(\QQ_{2,i})$ possess a countable $3\gamma-$cover labeled as $\bar \QQ_{2,i}$. Let $\QQ^1_2 = \cup_{i \in \NN}\QQ_{2,i}$ and $\bar \QQ^1_2 = \cup_{i \in \NN}\bar \QQ_{2,i}$. We note that $\bar \QQ^1_2$ is a countable $3\gamma-$cover of $\QQ^1_2$. We note that for any $\mu \in  \QQ^1_2, q_i \ge \frac12+\gamma$ for every $i \in \NN$ where $q = $ Mean$(\mu)$. Hence for any $\mu \notin \QQ^1_2, q_i<\frac{1}{2}+\gamma$ for every $i \in \NN$ where $q = $ Mean$(\mu)$. \\
We can now similarly repeat the produce of performing \emph{subset selection} on $\QQ^1_1$ for every coordinate $i \in \NN$ with $r_i = \frac{1}{2}\parof{\frac{1}{2}+\gamma}$ and deviation ${\gamma}$ and create $\QQ^2_1$ and $\QQ^2_2$ where Mean$(\QQ^2_1)$ does not possess a countable $3\gamma-$cover and Mean$(\QQ^2_2)$ has a countable $3\gamma-$cover $\bar \QQ^2_2$. \\
We recursively repeat the procedure for $K = \cov{\lceil}{\log_2\parof{\frac{1-2\gamma}{\gamma}}}{\rceil}$ iterations to obtain $K$ countable covers $\bar \QQ^1_2,\bar \QQ^2_2,\ldots,\bar \QQ^K_2$ for Mean$(\QQ^1_2),$ Mean$(\QQ^2_2), \ldots, $ Mean$(\QQ^K_2)$ respectively and $\QQ^K_1$ such that for any $\mu \in \QQ^K_1,q_i<\frac{1}{2^K}+\parof{1+\frac{1}{2}+\ldots+\frac{1}{2^{K-1}}}\gamma \le  3\gamma$ where  $q = $ Mean$(\mu)$. Consequently, Mean$(\QQ^K_1)$ can be covered by $(0,0,\ldots)$. Due to our application of the \emph{subset selection} procedure recursively, we obtain $ \QQ^K_1\cup\bigcup_{k \in [K]}\QQ^k_2  = \QQ$. Hence $\cup_{k \in [K]} \bar \QQ^k_2 \cup (0,0,\ldots)$ is a countable $3\gamma-$cover for Mean$(\QQ)$ which contradiction our assumption. 
\end{proof}
\begin{proof} (Theorem \ref{thm:infseqfatshattered})
We show that Mean$(\QQ)$ is infinitely sequentially fat-shattered by providing an infinite-depth tree such that each branch is realized by some mean vector in Mean$(\QQ)$. As previously argued, as $\QQ$ has non-separable mean vectors, there exists $\gamma>0$ such that there does not exist a countable $3\gamma-$cover. \\
We can build the tree recursively. At the root, Lemma \ref{lemma:coordinate} provides a coordinate $i_1$ that splits $\QQ$ into two collections of distributions that do not possess a countable $3\gamma-$cover. These subsets constitute the collections used to construct the left and right subtrees of the tree. We now consider the left and right subtrees separately. As the collections of distributions do not possess a countable $3\gamma-$cover, we can repeat the previous step. Therefore, we can use Lemma \ref{lemma:coordinate} at every depth of the tree, hence obtaining an infinitely fat shattered tree.
\end{proof}
This inherent structure of infinite fat shattering of the mean vectors of a collection of distributions that have non-separable mean vectors produces an interesting example.
We define a collection of distributions $\QQ_{tree}$ using their respective mean vectors. We consider a binary tree and label it using a mean vector by traversing it in a level order fashion. Formally, for every mean vector $q = (q_1,q_2,\ldots)$, the root corresponds to $q_1$, the left child of $q_1$ corresponds to $q_2$, the right child of $q_1$ corresponds to $q_3$, the left child of $q_2$ corresponds to $q_4$, the right child of $q_2$ corresponds to $q_5$ and so on. As our mean vector is infinite, the binary tree has an infinite depth. Finally, for any branch, i.e., a root-to-leaf path in the tree, we assign all their corresponding coordinates with the value $\frac{2}{3}$, whereas all other coordinates are given a value of $\frac{1}{3}$.\\
Hence, we can define $\QQ_{tree}$ as a collection of distributions for all such mean vectors as follows
\begin{equation*}
    \QQ_{tree} = \curlof{\mu : \mu = \text{Prod}(q) \text{ where } q \text{ satisfies the structure given above}}
\end{equation*}
We note that $\QQ_{tree}$ has non-separable mean vectors as for any $q, q' \in \QQ_{tree}, \norm{q-q'}_\infty = \frac{1}{3}.$ Hence we can use an argument similar to the one used to show the non-separability of Mean$(\QQ_{bin})$.\\
We will now establish the notation for demonstrating that  $\QQ_{tree}$ is UME-learnable.\\
For $\QQ_{tree}$, we will use a tree-specific notation. The root is labeled as $V()$. A branch is identified using a bit string where $0$ indicates a left node and $1$ indicates a right node. At depth $d$, we consider a $d-$dimensional binary vector $(b_1,b_2,\ldots,b_d)$ which provides the root-to-node path. This node is labeled as $V(b_1,b_2,\ldots,b_d)$. For example, for depth $2$, we have the labeling according to Figure 1. \\
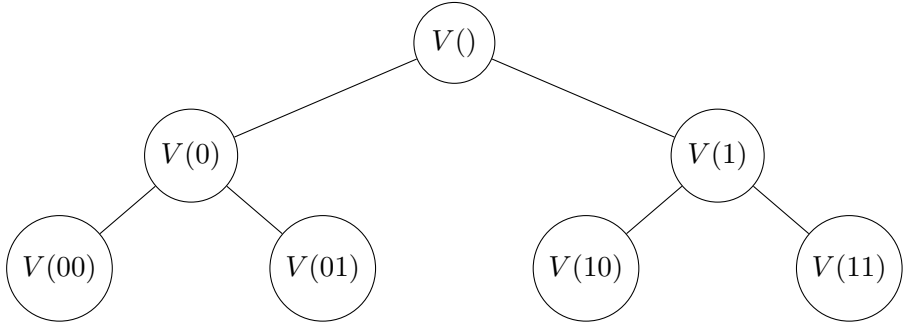
\begin{figure}
\label{fig:onlyfigure}
\centering
\begin{tikzpicture}[level distance=1.5cm,
  level 1/.style={sibling distance=7cm},
  level 2/.style={sibling distance=3.5cm}]
  \node[circle,draw] (root) {$V{()}$}
    child {
      node[circle,draw] (leftchild) {$V{(0)}$}
      child {
        node[circle,draw] (leftleft) {$V{(00)}$}
      }
      child {
        node[circle,draw] (leftright) {$V{(01)}$}
      }  
    }
    child {
      node[circle,draw] (leftchild) {$V{(1)}$}
      child {
        node[circle,draw] (leftleft) {$V{(10)}$}
      }
      child {
        node[circle,draw] (leftright) {$V{(11)}$}
      }  
    };
\end{tikzpicture} 
\caption{Labeling for tree of depth $2$}
\end{figure}
To UME-learn $\QQ_{tree}$, we note that finding the underlying distribution is equivalent to finding the branch labeled with $\frac{2}{3}$. We will refer to this branch as the true branch. We consider the following algorithm,
\begin{algorithm}
        \caption{Tree$(\QQ_{tree},n)$}
        \label{alg:tree}
        $\forall b \in \{0,1\}^\NN$ compute $\phi(b) = \liminf_{d \rightarrow \infty} \frac{1}{d}\sum_{j=1}^d \frac{1}{n}\sum_{i=1}^n X^{(i)}_{V(b_1,\ldots,b_j)}$\\
        Return $\tilde b$ such that $\phi(\tilde b) = \frac23$
\end{algorithm}
\\ The algorithm computes the limiting average of the empirical mean using the $n$ data points available along every branch of the tree. We will show that the value converges to $\frac{2}{3}$ only for the true branch, whereas the value uniformly converges to a value other than $\frac{2}{3}$ for all other branches.
\begin{proposition}
$\QQ_{tree}$ is UME-learnable by Algorithm \ref{alg:tree}.
\end{proposition}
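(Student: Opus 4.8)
The plan is to show that Algorithm~\ref{alg:tree} outputs the true branch with probability $1$ after finitely many samples, and that once it does, the returned mean vector equals the true mean vector exactly (so the $l_\infty$ error is $0$). Let $b^*$ denote the true branch, i.e., the unique root-to-leaf path whose coordinates are all labeled $\tfrac23$; every other coordinate of the true mean vector $q^*$ is $\tfrac13$. The key observation is that for any branch $b$, the quantity $\phi(b)$ is a limiting average over the $n$-sample empirical means $\hat q_{V(b_1,\dots,b_j)} = \frac1n\sum_{i=1}^n X^{(i)}_{V(b_1,\dots,b_j)}$ along the nodes of $b$. Each such empirical mean is a rational in $\{0, \tfrac1n, \dots, 1\}$, and for nodes off the true branch the underlying Bernoulli parameter is $\tfrac13$, while on the true branch it is $\tfrac23$.

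First I would handle the true branch. Along $b^*$ every node has parameter $\tfrac23$, so each $\hat q_{V(b^*_1,\dots,b^*_j)}$ concentrates around $\tfrac23$; but we cannot just say the Cesàro average converges to $\tfrac23$ pointwise, because with positive probability a given coordinate's empirical mean deviates. The cleaner route: note that $\frac1d \sum_{j=1}^d \hat q_{V(b^*_1,\dots,b^*_j)}$ is a Cesàro average of i.i.d.\ (across $j$) bounded random variables each with mean $\tfrac23$ — here I would invoke that distinct nodes correspond to distinct coordinates, which are independent under the product measure, so the strong law of large numbers gives $\liminf_{d\to\infty}\frac1d\sum_{j=1}^d \hat q_{V(b^*_1,\dots,b^*_j)} = \tfrac23$ almost surely. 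Hence $\phi(b^*) = \tfrac23$ with probability $1$, so the true branch always satisfies the acceptance criterion.

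Next I would show that no other branch $b \neq b^*$ satisfies $\phi(b) = \tfrac23$. Fix $b \neq b^*$ and let $d_0$ be the depth at which $b$ first diverges from $b^*$; for every $j > d_0$ the node $V(b_1,\dots,b_j)$ is off the true branch, hence has parameter $\tfrac13$. The nodes with $j \le d_0$ that lie on $b^*$ contribute a bounded amount that is washed out by the Cesàro average. For the tail nodes, again by independence across coordinates and the strong law, $\liminf_{d\to\infty}\frac1d\sum_{j=1}^d \hat q_{V(b_1,\dots,b_j)} = \tfrac13 < \tfrac23$ almost surely. A union bound over the countably many branches $b \neq b^*$ then shows that, with probability $1$, \emph{simultaneously} $\phi(b^*) = \tfrac23$ and $\phi(b) \neq \tfrac23$ for all $b \neq b^*$; on this event Algorithm~\ref{alg:tree} returns exactly $\tilde b = b^*$, so $\tilde q = q^*$ and $\norm{\tilde q - q^*}_\infty = 0$. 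Since this holds for every $n \ge 1$ almost surely, in particular $\expect_{S\sim (\mu^*)^n}\norm{\mathcal{A}(S) - q^*}_\infty = 0$ for all $n$, which is more than enough for $UME$-learnability.

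The main obstacle is making the two strong-law applications rigorous: the summands $\hat q_{V(b_1,\dots,b_j)}$ for varying $j$ are i.i.d.\ only because distinct tree nodes index distinct coordinates of $\{0,1\}^\NN$ and the measure is a product measure, so each is $\text{Bin}(n, \text{param})/n$ and they are mutually independent across $j$ — this is where the product-measure hypothesis on $\QQ_{tree}$ is essential. One must also be careful that the countable union bound is over branches $b \in \{0,1\}^\NN$, which form an uncountable set; the fix is that the event "$\phi(b) \ne \tfrac23$" need only be controlled for branches that agree with $b^*$ up to some finite depth and then leave — but in fact it suffices to note that $\phi(b)$ depends on $b$ only through the countably many finite prefixes, and for each branch the bad event has probability $0$, and a more careful argument restricts attention to the countably many "first divergence depths," at each of which only finitely many relevant prefix-disagreement patterns matter for the $\liminf$; alternatively, one shows directly that almost surely the SLLN holds along \emph{all} branches at once by a Fubini/tail argument. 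I would present the restriction-to-countably-many-events version, as it is the least delicate.
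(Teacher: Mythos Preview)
Your treatment of $b^*$ is correct, and for each \emph{fixed} $b\neq b^*$ the per-branch strong law does give $\phi(b)=\tfrac13$ almost surely. The gap is exactly where you flag it: the passage from ``each branch'' to ``all branches simultaneously.'' Neither of your proposed repairs works. Restricting to divergence depths does not reduce to countably many events: once $b$ leaves $b^*$ at depth $d_0$ it enters a full infinite binary subtree with $2^{\aleph_0}$ continuations, each visiting a distinct infinite sequence of coordinates, and $\phi(b)$ depends on the entire continuation --- there is no finite-prefix reduction. Nor is there a soft ``Fubini/tail'' route, because the uniform statement you need is in fact false. Take $n=1$ and work in the subtree hanging off the wrong child of the root, where every node label is an independent $\mathrm{Bernoulli}(\tfrac13)$; the (data-dependent) branch that at each step moves to a child labeled $1$ whenever one exists has, along it, labels that are i.i.d.\ $\mathrm{Bernoulli}\big(1-(2/3)^2\big)=\mathrm{Bernoulli}(5/9)$, so this branch satisfies $\phi=5/9$ almost surely. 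Hence $\prob(\forall b\neq b^*:\phi(b)=\tfrac13)=0$: the branch-wise null sets do not collapse to a single null set, and your claim that the algorithm succeeds for every $n\ge 1$ is not established.

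The paper supplies precisely the uniform estimate you are missing. At each finite depth $d$ there are only $2^d$ partial branches; for any one of them the centered average $\tfrac{1}{dn}\sum_{j\le d}\sum_{i\le n}\big(X^{(i)}_{V(b_1,\dots,b_j)}-q_{V(b_1,\dots,b_j)}\big)$ is a mean of $dn$ independent bounded terms, so Hoeffding bounds the probability it exceeds $1/\sqrt n$ by $2e^{-2d}$. The union over the $2^d$ partial branches leaves $2^{d}\cdot 2e^{-2d}$, summable in $d$, and Borel--Cantelli then gives, almost surely, $\sup_{b}\big|\phi(b)-\lim_{d\to\infty}\tfrac1d\sum_{j\le d}q_{V(b_1,\dots,b_j)}\big|\le 1/\sqrt n$. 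Since that inner limit equals $\tfrac13$ for every $b\neq b^*$, one obtains $\phi(b)\le\tfrac13+1/\sqrt n<\tfrac23$ once $n$ is large enough (the paper takes $n\ge 36$). The essential point is that exponential concentration --- not merely the strong law --- is what defeats the exponential growth in the number of branches; a per-branch argument cannot see this and cannot be rescued by any countable reduction.
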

\begin{proof}
We note that due to our construction of $\QQ_{tree}$, finding the true underlying distribution is equivalent to finding the branch that has been labeled as $\frac{2}{3}$. Consequently, we refer to this as the true branch and denote it by $b^*$. Also note that if a parent node is labeled $\frac{1}{3}$, then the child node must be labeled $\frac{1}{3}$ on any branch. This provides us with the core idea of the algorithm. We use this temporal relation to devise the test 
\begin{equation*}
    \label{eqn:treetest}
    \phi(b) = \liminf_{d \rightarrow \infty} \frac{1}{d}\sum_{j=1}^d \frac{1}{n}\sum_{i=1}^n X^{(i)}_{V(b_1,\ldots,b_j)}
\end{equation*}
We note that for the true branch $b^*, \phi(b^*) = \frac{2}{3}$ by the law of large numbers. 
Algorithm \ref{alg:tree} will UME-learn $\QQ_{tree}$ if with probability $1$, for all $b \neq b^*, \phi(b) \neq \frac23$ which can be equivalently proven if
    \begin{equation*}
        \label{eqn:uctree}
        \sup_{b} \cov{|}{\phi(b) - \expect \phi(b)}{|} = \sup_b  \lim_{d \rightarrow \infty}\left|\frac{1}{d} \sum_{j=1}^d \frac{1}{n} \sum_{i=1}^n \parof{X^{(i)}_{V(b_1,\ldots,b_j)} -   q_{V(b_1,\ldots,b_j)}}\right| = 0 
    \end{equation*}
We consider the tree up to depth $d$ and analyze the partial tree. We can apply a union bound over the $2^d$ branches of the partial tree. As these variables are independent (not identically distributed) random variables, we can use Hoeffding inequality (\cite{Hoeffding1963}) to obtain,
\begin{equation}
\label{eqn:convinprob}
    \prob \parof{\sup_b \left|\frac{1}{d} \sum_{j=1}^d \frac{1}{n} \sum_{i=1}^n \parof{X^{(i)}_{V(b_1,\ldots,b_j)} -   q_{V(b_1,\ldots,b_j)}}\right|>\frac{1}{\sqrt{n}}} \le 2^d \cdot 2e^{-2dn\parof{\frac{1}{\sqrt{n}}}^2}<2^{-cd}
\end{equation}
We define event $E_d$ as obtaining a deviation of more than $\frac{1}{\sqrt{n}}$ between the estimated mean of the partial branch at depth $d$ and its true mean. By equation \eqref{eqn:convinprob} we know that $\prob(E_d) <2^{-cd}.$ We note that
$    \sum_{d=0}^\infty \prob(E_d) < \sum_{d=0}^\infty 2^{-cd} = \frac{1}{1-2^{-c}} < \infty$
Hence, by the First Borel-Cantelli lemma, we obtain that  with probability $1, \exists d_0 < \infty$ such that 
\begin{equation}
\label{eqn:d0cond}
    \text{For any } d>d_0,\; \sup_b \left|\frac{1}{d} \sum_{j=1}^d \frac{1}{n} \sum_{i=1}^n \parof{X^{(i)}_{V(b_1,\ldots,b_j)} -   q_{V(b_1,\ldots,b_j)}}\right| < \frac{1}{\sqrt{n}}
\end{equation}
i.e., every sufficiently large depth has deviations that are at most $\frac{1}{\sqrt{n}}$. 
Hence, using equation \eqref{eqn:d0cond} we obtain
\begin{equation*}
\label{eqn:limit}
\lim_{d \rightarrow \infty} \sup_b \left|\frac{1}{d} \sum_{j=1}^d \frac{1}{n} \sum_{i=1}^n \parof{X^{(i)}_{V(b_1,\ldots,b_j)} -   q_{V(b_1,\ldots,b_j)}}\right| < \frac{1}{\sqrt{n}}
\end{equation*}
We can further apply Fatou's Lemma (\cite{rudin1991functional}) to obtain
\begin{equation*}
   \sup_b  \lim_{d \rightarrow \infty}  \left|\frac{1}{d} \sum_{j=1}^d \frac{1}{n} \sum_{i=1}^n \parof{X^{(i)}_{V(b_1,\ldots,b_j)} -   q_{V(b_1,\ldots,b_j)}}\right| < \frac{1}{\sqrt{n}}
\end{equation*}
For our specific example of $\QQ_{tree}$, for all non-true branches, the limiting average of the means along a branch is $\frac{1}{3}$. Therefore, we obtain,
\begin{equation*}
\label{eqn:uniformboundonallbranch}
\text{For any } b \neq b^*, \phi(b)\le \frac{1}{3}+\frac{1}{\sqrt{n}} 
\end{equation*}
So, if $n\ge 36,$
\begin{equation*}
\label{eqn:treespecficbound}
    \text{For any }b \neq b^*, \phi(b)\le \frac{1}{2}
\end{equation*}  
Hence, with probability $1, \text{ for all } b \neq b^*, \phi(b) \neq \frac{2}{3}$ and for the true branch $\phi(b^*) = \frac{2}{3}.$
\end{proof}
\section{UME-learnability for uncountable coordinate sets}
\label{appendix:uncountable}
Our work is motivated by the framework adopted in \cite{cohen2023localglivenkocantelli}, which considers the $P-$Glivenko-Cantelli setting for a countable coordinate set. In section \ref{sec:separable}, we show that collections of distributions that have separable mean vectors are UME-learnable. The technique used in the algorithm to claim UME-learnability (Algorithm \ref{alg:sep}) is to eliminate candidates of the $\eps-$approximation of the mean vector. Whenever a candidate vector deviates excessively from the empirical mean of the first $n$ coordinates, we eliminate it. This approach fundamentally relies on the ability to inspect finitely many coordinates and therefore does not directly extend to uncountable coordinate sets. To overcome this obstacle, we revisit the strategy developed in Section \ref{sec:countableunion}, where we are able to eliminate a candidate estimator using a \emph{single} informative coordinate. We leverage this idea to show that separability of the mean space remains sufficient for UME-learnability even when the coordinate set is uncountable.\\ 
Throughout this section, we assume that all measure-theoretic subtleties can be resolved. We begin by defining an oracle that compares two mean vectors under the $\ell_\infty$ norm. Given two vectors $q^1,q^2$ and a tolerance $\eps$, the oracle either certifies that the vectors are $\eps-$close or returns a coordinate on which they differ by more than $\eps$.
\begin{algorithm}
    \caption{$\ell_\infty$-oracle($q^1,q^2,\eps$)}
    \label{alg:oracle}
    \begin{algorithmic}
    \State \textbf{if} $\norm{q^1-q^2}_\infty <\eps$ \textbf{then }\Return `close'
    \State \textbf{else} \Return $J \in \{j:\modu{q^1_j-q^2_j}>\eps\}$    
    \end{algorithmic}
\end{algorithm}
\\ Using the oracle in Algorithm \ref{alg:oracle}, we modify the survival test as seen in Algorithm \ref{alg:survival}. Given a countable $\eps-$cover of the mean space, we conduct a $1$-vs-$n$ tournament among the first $n$ candidate vectors. A candidate that wins against all others is declared the winner, yielding an $\eps-$approximation of the true mean vector. Crucially, the oracle allows us to select which coordinate is tested, thereby extending UME-learnability to uncountable coordinate sets.
\begin{algorithm}
    \caption{Modified $\eps-$approximate$(\QQ,n,\eps)$}
    \label{alg:modsurvival}
    \begin{algorithmic}
    \State Initialize $\QQ_\eps = \{q^1,q^2,\ldots\}$ as the countable $\eps-$cover of Mean$(\QQ)$
    \State Let $ \hat q$ be the empirical mean computed using the training data.
    \State \textbf{for} {$s$ goes from $1$ to $n$} \textbf{do}
    \State \hspace{\algorithmicindent}Let wins $\gets 0$
    \State \hspace{\algorithmicindent}\textbf{for} {$t$ goes from $1 $ to $n$} \textbf{ do }{
    \State \hspace{\algorithmicindent}\hspace{\algorithmicindent} Let $J=\ell_\infty-$oracle$\parof{q^s,q^t,4\eps}$ 
    \State \hspace{\algorithmicindent}\hspace{\algorithmicindent} \textbf{if} $J$ is `close' \textbf{then} wins $\gets $ wins $+1$
    \State \hspace{\algorithmicindent} \hspace{\algorithmicindent}\textbf{else if}  $\modu{\hat q_J -  q^s_J} < \eps+\sqrt{\frac{3\log n}{n}}$ \textbf{then}  wins $\gets $ wins $+ 1$
    }
    \State \hspace{\algorithmicindent}\textbf{if} wins is equal to $n$ \textbf{then} return $q^s$
    \end{algorithmic}
\end{algorithm}
\begin{lemma}
\label{lemma:modepsappx}
    If collection of distributions $\QQ$ with an uncountable coordinate set has a countable $\eps-$cover for its mean then for any $\mu \in \QQ,$ with probability $1$ there exists a data size $n_0$ such that for all $n>n_0$ the estimator $\tilde q$ returned by Algorithm \ref{alg:modsurvival} satisfies
    \[\norm{\tilde q-q}_\infty\le 5\eps\]
    where $q = $Mean$(\QQ)$
\end{lemma}
\begin{proof}   
    Let a collection of distributions $\QQ$ and $\eps>0$ be given. Let $\QQ_\eps$ be a countable $\eps-$cover of Mean($\QQ$) under the $\ell_\infty$ norm. Let $\mu^*$ be the true underlying distribution and let $q^* =$Mean($\mu^*$). Let $q^{i^*_\eps}$ be vector in $\QQ_\eps$ such that $\norm{q^*-q^{i^*_\eps}}_\infty \le \eps$. We refer to $q^{i^*_\eps}$ as the $\eps-$approximating vector. Let $n>i^*_\eps.$ Let $ I_1 = \{i \in [n]: \norm{q^{i^*_\eps}-q^i}_\infty \le 4\eps\}$ and $I_2 = [n]\setminus I_1$. For any $i \in I_2$ let $j_i \in \{j:\modu{q^{i^*_\eps}_{j}-q^i_{j}}>4\eps\}$. \\
    Let event $E_n$ denote $q^{i^*_\eps}$ failing the tournament against any of the $n$ other vectors. 
    By definition, $q^{i^*_\eps}$ will wins against any vector $q^i$ such that $i \in I_1$. Hence, we focus on winning against $q^i$ such that $ i \in I_2$. To analyze the probability of $E_n$, we use union bound and triangle inequality to obtain
    \begin{equation}
        \label{eqn:uncount1}
        \prob\parof{\exists i \in I_2: \modu{\hat q_{j_i}-q^{i^*_\eps}_{j_i}} >\eps+\sqrt{\frac{3\log n}{n}}} \le \sum_{i=1}^n \prob\parof{\modu{\hat q_{j_i}-q^*_{j_i}}+\modu{q^*_{j_i}-q^{i^*_\eps}_{j_i}} >\eps+\sqrt{\frac{3\log n}{n}}}
    \end{equation}
    We further use the fact that $\norm{q^*-q^{i^*_\eps}}_\infty<\eps$ to obtain
    \begin{equation}
        \sum_{i=1}^n \prob\parof{\modu{\hat q_{j_i}-q^*_{j_i}}+\modu{q^*_{j_i}-q^{i^*_\eps}_{j_i}} >\eps+\sqrt{\frac{3\log n}{n}}} \le \sum_{i=1}^n \prob\parof{\modu{\hat q_{j_i}-q^*_{j_i}}>\sqrt{\frac{3\log n}{n}}}   
    \end{equation}
    Applying Hoeffding's inequality we get, 
    \begin{equation}
        \sum_{i=1}^n \prob\parof{\modu{\hat q_{j_i}-q^*_{j_i}}>\sqrt{\frac{3\log n}{n}}}    \le n \cdot 2e^{-2n\parof{\sqrt{\frac{3\log n}{n}}}^2} = \frac{2}{n^5}.
    \end{equation}
    Let $\tilde q$ denote the vector returned by Algorithm \ref{alg:modsurvival}. A vector $q^i$ such that $i \in I_1$ could also win the tournament. By our previous analysis with probability at least $1-\frac{2}{n^5}$, the index of $\tilde q$ is in $I_1$. But as $\norm{q^i-q^{i^*_\eps}}_\infty<4\eps$, therefore by our previous analysis with probability at least $1-\frac{2}{n^5}$ $\tilde q$ will be $5\eps-$approximation of the true underlying mean vector.\\
    We note that $\sum_{n=1}^\infty \prob(E_n)\le \sum_{n=1}^\infty \frac{2}{n^5}<\infty$. Hence, we can apply the First Borel-Cantelli Lemma to conclude that with probability $1$ there exists $n_0>0$ such that for all $n>n_0$ the algorithm successfully finds a $5\eps-$approximating vector.
    \end{proof}
    We now modify Algorithm \ref{alg:sep} by using Algorithm \ref{alg:modsurvival} instead of Algorithm \ref{alg:epsapprox}, thereby extending UME-learnability to a collection of distributions indexed by an uncountable set.
     \begin{algorithm}
 \begin{algorithmic}
\caption{Modified Separable $(\mathcal{Q}, n>0)$}
\label{alg:modsep}
\State Initialize $\PP \gets \text{Mean}(\QQ)$ where $\text{Mean}(\QQ)$ is as in equation \eqref{eqn:meanvector} \\
$\tilde q \gets \emptyset, k \gets 1 $ 
\State \textbf{while} {$\PP$ is not empty} \textbf{do}{
    \State \hspace{\algorithmicindent} $\eps_k \gets \frac{1}{2^k}, \tilde q \gets $ any $q \in \PP$ 
    \State \hspace{\algorithmicindent} Run Algorithm \ref{alg:modsurvival}$(\QQ,n,\eps_k)$ to obtain $q^{k}$
    \State \hspace{\algorithmicindent} $\PP \gets \PP \cap \mathcal{B}(q^{k},\eps_k)$
    \State \hspace{\algorithmicindent} $k \gets k+1$
}
\State \Return{$\tilde q$}\;
\end{algorithmic}
\end{algorithm}
    \begin{theorem}
    \label{thm:modseparable}
        If the collection of distributions with an uncountable coordinate set $\QQ$ has separable mean vectors, then $\QQ$ is UME-learnable by Algorithm \ref{alg:modsep}.
    \end{theorem}
    The proof is similar to the proof of Theorem \ref{thm:separable}. We use Lemma \ref{lemma:modepsappx} instead of Lemma \ref{lemma:epsapprox}.
\section{Uniform UME-learnability}
\label{app:uniform-learnability}
In our work, we focus on uniform convergence over a function class and not over the collection of distributions, and we analyze UME-learnability asymptotically. In this section, we show that if the mean vectors of a collection of distributions are totally bounded, then we can provide non-asymptotic bounds on the expected loss using algorithm \ref{alg:totallybounded}. We say the mean vectors of a collection of distributions $\QQ$ are totally bounded if for every $\eps>0$ there exists a \emph{finite} $\eps-$cover for Mean$(\QQ)$. \\
\begin{algorithm}
    \caption{Totally Bounded $\eps-$approximate$(\QQ_{TB},n)$}
    \label{alg:totallybounded}
    \begin{algorithmic}
    \State Let $N $ be the $\eps-$covering number for Mean$(\QQ_{TB})$ under the $\ell_\infty$ norm.
    \State Let $\QQ_\eps = \{q^1,q^2,\ldots,q^N\}$ as the countable $\eps-$cover of Mean$(\QQ_{TB})$
    \State Let $ \hat q$ be the empirical mean computed using the training data.
    \State \textbf{for} {$s$ goes from $1$ to $N$} \textbf{do}
    \State \hspace{\algorithmicindent}Let wins $\gets 0$
    \State \hspace{\algorithmicindent}\textbf{for} {$t$ goes from $1 $ to $N$} \textbf{ do }
    \State \hspace{\algorithmicindent}\hspace{\algorithmicindent} \textbf{if} for every $j \in \NN \modu{ q^s_j -  q^t_j} \le 4 \eps$ \textbf{then} wins $\gets $ wins $+1$
    \State \hspace{\algorithmicindent} \hspace{\algorithmicindent}\textbf{else}
    \State \hspace{\algorithmicindent} \hspace{\algorithmicindent} \hspace{\algorithmicindent} $J = \min\{j \in \NN: \modu{q^s_j -  q^t_j} > 4 \eps\}$
    \State \hspace{\algorithmicindent} \hspace{\algorithmicindent} \hspace{\algorithmicindent} \textbf{if} $\modu{\hat q_J -  q^s_J} < 2\eps$ \textbf{then}  wins $\gets $ wins $+ 1$
    \State \hspace{\algorithmicindent}\textbf{if} wins is equal to $n$ \textbf{then} return $q^s$ 
    \end{algorithmic}
\end{algorithm}\\
Algorithm \ref{alg:totallybounded} is a modification of Algorithm \ref{alg:survival} in which for every $\eps>0$ as the $\eps-$cover is \emph{finite} we can find a $5\eps-$approximation of the true underlying distribution with probability at least $1-2\eps$ by comparing all the vectors against each other after obtaining a sufficiently large amount of data points. 
\begin{theorem}
    Let $\QQ_{TB}$ be a collection of distributions such that Mean$(\QQ_{TB})$ is totally bounded. Let $N(\eps)$ denote the $\eps-$covering number of Mean$(\QQ_{TB})$. $\QQ_{TB}$ is UME-learnable using Algorithm \ref{alg:totallybounded} such that for every $\mu \in \QQ_{TB}$,  \[\expect_{S\sim{\mu^n}} \norm{\tilde q-q}_\infty \le  7 \inf_{\eps>0} \curlof{\eps:n>\frac{1}{2\eps^2}\log\parof{\frac{N(\eps)}{\eps}}}\]
    where $q = $Mean$(\QQ)$ and $\tilde q = $ Totally Bounded $\eps-$approximate$(\QQ_{TB},n)$
\end{theorem}
\begin{proof}
    Let $\QQ_{TB}$ be a collection of distributions such that Mean$(\QQ_{TB})$ is totally bounded. Let $\QQ_\eps$ be a finite $\eps-$cover of Mean($\QQ_{TB}$) under the $\ell_\infty$ norm. Let $N$ denote the $\eps-$covering number of Mean($\QQ_{TB}$). Let $\mu^*$ be the true underlying distribution and let $q^* =$Mean($\mu^*$). 
    Let $q^{i^*_\eps}$ be vector in $\QQ_\eps$ such that $\norm{q^*-q^{i^*_\eps}}_\infty \le \eps$. We refer to $q^{i^*_\eps}$ as the $\eps-$approximating vector. \\
    Let $n>\frac{1}{2\eps^2}\log \parof{\frac{N}{\eps}}$. 
    Let $ I_1 = \{i \in [N]: \norm{q^{i^*_\eps}-q^i}_\infty \le 4\eps\}$ and $I_2 = [N]\setminus I_1$. For any $i \in I_2$ let $j_i = \min \{j\in \NN:\modu{q^{i^*_\eps}_{j}-q^i_{j}}>4\eps\}$. \\
    We analyze the probability that $q^{i^*_\eps}$ loses a comparison against some $q^i \in \QQ_\eps$. By using union bound and triangle inequality, we obtain
    \begin{equation}
        \prob\parof{\exists i \in I_2: \modu{\hat q_{j_i}-q^{i^*_\eps}_{j_i}} >2\eps} \le \sum_{i=1}^N \prob\parof{\modu{\hat q_{j_i}-q^*_{j_i}}+\modu{q^*_{j_i}-q^{i^*_\eps}_{j_i}} >2\eps}
    \end{equation}
    We further use the fact that $\norm{q^*-q^{i^*_\eps}}_\infty<\eps$ to obtain
    \begin{equation}
        \sum_{i=1}^N \prob\parof{\modu{\hat q_{j_i}-q^*_{j_i}}+\modu{q^*_{j_i}-q^{i^*_\eps}_{j_i}} >2\eps} \le \sum_{i=1}^N \prob\parof{\modu{\hat q_{j_i}-q^*_{j_i}}>\eps}   
    \end{equation}
    We further apply Hoeffding inequality (\cite{Hoeffding1963}),
    \begin{equation}
        \sum_{i=1}^N \prob\parof{\modu{\hat q_{j_i}-q^*_{j_i}}>\eps}    \le N \cdot 2e^{-2n\eps^2} \le 2N e^{-2\frac{1}{2\eps^2}\log \parof{\frac{N}{\eps}}\eps^2} = 2\eps
    \end{equation}
    Let $\tilde q$ be the vector returned by running Algorithm \ref{alg:totallybounded} on $\QQ_{TB}$ using sufficiently large amount of training data $\parof{\text{i.e. }n \ge \frac{1}{2\eps^2}\log\parof{\frac{N}{\eps}}}$. We note that with probability at least $1-2\eps$, any vector in $I_1$ could have been returned by the algorithm. Thus, the algorithm will return a $5\eps-$approximation of the true underlying mean vector with probability of error at most $2\eps$.\\
    Therefore, we note that 
    \[\expect\norm{\tilde q-q}_\infty \le 5\eps \cdot \prob\parof{\norm{\tilde q-q}_\infty \le 5\eps}+1\cdot \prob(\norm{\tilde q-q}_\infty>5\eps) \le 7\eps\]
    Therefore if we have been provided with $n$ data points, we can optimize for $\eps$ to obtain \[\expect \norm{\tilde q - q}_\infty\le 7 \inf_{\eps>0} \curlof{\eps:n>\frac{1}{2\eps^2}\log\parof{\frac{N(\eps)}{\eps}}}\]
\end{proof}
\end{document}